\documentclass{article}

\usepackage[preprint]{neurips_2026}

\usepackage[utf8]{inputenc}
\usepackage[T1]{fontenc}
\usepackage{hyperref}
\usepackage{url}
\usepackage{booktabs}
\usepackage{nicefrac}
\usepackage{microtype}
\usepackage{graphicx}
\usepackage{multirow}
\usepackage{subcaption}
\usepackage{algorithm}
\usepackage{algorithmic}

\usepackage{amsmath}
\usepackage{amssymb}
\usepackage{mathtools}
\usepackage{amsthm}
\usepackage{thmtools}
\usepackage{thm-restate}
\usepackage{adjustbox}
\usepackage[capitalize,noabbrev]{cleveref}

\declaretheorem[numberwithin=section]{theorem}
\declaretheorem[sibling=theorem]{proposition}

\declaretheorem[sibling=theorem]{corollary}

\renewcommand{\theHtheorem}{\arabic{section}.\arabic{theorem}}
\renewcommand{\theHproposition}{\arabic{section}.\arabic{theorem}}
\renewcommand{\theHlemma}{\arabic{section}.\arabic{theorem}}
\renewcommand{\theHcorollary}{\arabic{section}.\arabic{theorem}}
\renewcommand{\theHdefinition}{\arabic{section}.\arabic{theorem}}
\renewcommand{\theHassumption}{\arabic{section}.\arabic{theorem}}
\renewcommand{\theHremark}{\arabic{section}.\arabic{theorem}}

\usepackage{adjustbox}
\usepackage{enumitem}
\usepackage[disable,textsize=tiny]{todonotes}
\usepackage{stmaryrd}
\usepackage[safe]{tipa}
\usepackage{siunitx}
\usepackage{tikz}
\usetikzlibrary{decorations.pathmorphing,arrows.meta}

\usepackage{bm}
\newcommand{\vecc}[1]{{\boldsymbol{\mathbf{#1}}}}

\newcommand{\vx}{{\vecc{x}}}
\newcommand{\vy}{{\vecc{y}}}

\newcommand{\vz}{{\vecc{z}}}

\newcommand{\vtheta}{\vecc{\theta}}

\DeclareMathOperator*{\argmax}{argmax}

\newcommand{\defn}[1]{\textbf{#1}}
\newcommand{\Real}{\mathbb{R}}

\newcommand{\E}[2][]{\mathop{\mathbb{E}}_{{#1}}[#2]}

\newcommand{\Dc}{\mathcal{D}}

\newcommand{\Yc}{\mathcal{Y}}
\newcommand{\Zc}{\mathcal{Z}}

\DeclareRobustCommand{\thinskip}{\hskip 0.16667em\relax}
\def\emdash{---}
\def\d@sh#1#2{\unskip#1\thinskip#2\thinskip\ignorespaces}
\def\Dash{\d@sh\nobreak\emdash}
\def\Ldash{\d@sh\empty{\hbox{\emdash}\nobreak}}
\def\Rdash{\d@sh\nobreak\emdash}

\crefname{ineq}{inequality}{inequalities}
\creflabelformat{ineq}{#2{\upshape(#1)}#3}

\crefname{localtheorem}{Theorem}{Theorems}
\numberwithin{localtheorem}{section}

\crefname{locallemma}{Lemma}{Lemmas}
\numberwithin{locallemma}{section}

\title{How Fast Should a Model Commit to Supervision? Training Reasoning Models on the $J_Q$ Loss Continuum}

\author{%
  Chu-Cheng Lin \quad Eugene Ie\\
  Google\\
  \texttt{\{kitsing, eugeneie\}@google.com}
}

\begin{document}

\maketitle

\begin{abstract}
SFT-then-RLVR is widely used for post-training reasoning models, but why this specific ordering, and why RLVR-only stalls at cold start, have lacked a unifying theoretical account. We provide that account under a unified loss family $J_Q$ using the Tsallis $q$-logarithm. $J_Q$ is a single-parameter family that interpolates between RLVR (at $q{=}0$, the \textit{exploitation pole}) and the log-marginal-likelihood over latent trajectories (at $q{=}1$, the \textit{density-estimation pole}), under which the standard pipeline corresponds to a stepwise $q{=}1 \to 0$ schedule. All members share the same per-example gradient direction, differing only by a per-instance amplification $P_\vtheta^{-q}$ that reweights each instance independently of the learning rate. Under gradient flow analysis, we show that the exploitation pole requires $\Omega(\nicefrac{1}{p_0})$ time to escape cold start but is robust to label noise, while the density-estimation pole escapes in $\Theta\big(\log(\nicefrac{1}{p_0})\big)$ but memorizes label noise. This separation explains how SFT ($q{=}1$) first moves the model out of the cold-start regime, followed by the more robust RLVR ($q{=}0$), under the SFT-then-RLVR paradigm. We further derive two Monte Carlo estimators that directly optimize fixed-$q$ on the $J_Q$ continuum, without annotated rationales: Gradient-Amplified RL (GARL) and Posterior-Attenuated Fine-Tuning (PAFT), with shared bias $O\big(\nicefrac{q}{M P_\vtheta^q}\big)$ but different variance and stability properties. On FinQA, HotPotQA, and MuSiQue, GARL at sufficiently high $q$ substantially mitigates cold-start stalling, escaping cold start where GRPO fails entirely. In warm start, GARL at low $q$ dominates FinQA where training is stable; on HotPotQA and MuSiQue, GARL destabilizes and PAFT at $q{=}0.75$ remains stable, reaching $47.9$ \texttt{m@16} on HotPotQA ($+13.9$ over GRPO).
\end{abstract}

\section{Introduction}

The standard recipe for adapting reasoning models is supervised fine-tuning (SFT) on annotated rationales followed by reinforcement learning from verifiable rewards (RLVR) \citep{ouyang2022training,guo2025deepseek,shao2024deepseekmath,chu2025sft}. Yet two questions about it lack a unifying theoretical account: \emph{why this specific ordering} and \emph{why RLVR alone stalls at cold start} (when initial $P_\vtheta$ is near zero). Recent Rao--Blackwellized variants \citep{zhou2026reinforcing} ensure non-zero gradients but, as we show, reduce variance without accelerating escape.

We provide such an account under exact-match supervision. Using the Tsallis $q$-logarithm \citep{tsallis1988possible}, we define a loss continuum $J_Q$ with a scalar \emph{commitment} parameter $q \in [0, 1]$ that interpolates between REINFORCE-style exploitation and $\log$-marginal-likelihood maximization. All members of $J_Q$ share one per-instance gradient direction, differing only by a factor $P_\vtheta^{-q}$ (\cref{fig:continuum}; formal definitions in \cref{sec:background}). This per-instance reweighting amplifies the gradient on unfamiliar (low-$P_\vtheta$) instances when $q$ is large \Dash an effect no global learning rate can replicate.\footnote{Adam-style adaptive optimizers \citep{kingma2015adam} adjust step sizes per-\emph{parameter}, not per-\emph{example}; they cannot substitute for $P_\vtheta^{-q}$.}

The commitment $q$ thus acts as a training-time analog of inference temperature: high $q$ enables fast cold-start escape in $\Theta(\log(\nicefrac{1}{p_0}))$ time (\cref{thm:cold-start-escape}) but memorizes label errors (\cref{thm:noise-fitting}); low $q$ is noise-robust but escape slows to $\Omega(\nicefrac{1}{p_0})$ (\cref{thm:cold-start-lower}). This explains why SFT-then-RLVR succeeds: SFT corresponds to $q{=}1$ (log-marginal-likelihood maximization with the annotated rationale fixed), where $P_\vtheta^{-1}$ amplification escapes cold start; switching to RLVR ($q{=}0$) afterward filters noisy supervision. It also suggests that an intermediate $q$ can cold-start a reasoning model under $J_Q$ directly, without SFT. Since $P_\vtheta$ is intractable, we estimate $\nabla_\vtheta J_Q$ by two Monte Carlo factorizations with different stability (\cref{sec:practical-mc-estimates}).

\begin{figure}[t]
\centering
\begin{adjustbox}{max width=\linewidth}
\begin{tikzpicture}[>=stealth, font=\small]

\node[draw, rounded corners, fill=gray!10, inner sep=5pt, align=center] at (6, 4.5) {%
	\textbf{Gradient:} $\nabla_{\vtheta}\ell_q = P_{\vtheta}^{-q}\,\nabla_{\vtheta}\ell_0 = P_{\vtheta}^{1-q}\,\nabla_{\vtheta}\ell_1$ \;\; (\cref{thm:q-gradient-geometry})
};

\draw[->, thick, purple!70] (0.5, 3.2) -- (11.5, 3.2);
\node[above=2pt, purple!70!black, align=center] at (6, 3.2) {\footnotesize $q$: commitment to supervision};
\node[below=2pt, font=\scriptsize, anchor=north] at (3, 3.2) {{\color{red!70!black}$\longleftarrow$ resolves noise}};
\node[below=2pt, font=\scriptsize, anchor=north] at (9, 3.2) {{\color{blue!70!black}resolves ambiguity $\longrightarrow$}};

\node[draw, rounded corners, fill=gray!10, inner sep=5pt, align=center] at (6, 1.5) {%
    \textbf{Loss:} $\ell_q = -\log_q(P_{\vtheta}) = \nicefrac{(1 - P_{\vtheta}^{1-q})}{(1-q)}$ \;\; (\cref{eq:single-loss})
};

\draw[very thick] (0,0) -- (12,0);
\node[below=4pt, font=\bfseries] at (0,0) {$q=0$};
\node[below=4pt, font=\bfseries] at (12,0) {$q=1$};
\foreach \x/\lab in {3/{0.25}, 6/{0.5}, 9/{0.75}} {
    \node[below=4pt, gray] at (\x,0) {\lab};
}

\fill[red!60] (0,0) circle (3pt); \fill[blue!60] (12,0) circle (3pt);
\foreach \x in {3, 6, 9} { \fill[gray] (\x,0) circle (1.5pt); }

\node[draw, rounded corners, anchor=north, align=left, inner sep=4pt, fill=red!5] at (0, -0.7) {%
    \footnotesize\textbf{Exploitation pole} \\[-1pt]
    \footnotesize \textit{Loss:} $\ell_0 = 1 - P_{\vtheta}$ (bounded) \\[-1pt]
    \footnotesize \quad mode-seeking minimizer \\[-1pt]
    \footnotesize \textit{Gradient:} $\nabla\ell_0 = -\nabla P_{\vtheta}$ \\[-1pt]
    \footnotesize \quad recovers REINFORCE \\[-1pt]
    \footnotesize Noise-robust; cold start $\Omega(p_0^{-1})$ {\scriptsize(\cref{thm:cold-start-lower})}};

\node[draw, rounded corners, anchor=north, align=left, inner sep=4pt, fill=blue!5] at (12, -0.7) {%
    \footnotesize\textbf{Density-estimation pole} \\[-1pt]
    \footnotesize \textit{Loss:} $\ell_1 = -\log P_{\vtheta}$ (unbounded) \\[-1pt]
    \footnotesize \quad mode-covering minimizer (proper) \\[-1pt]
    \footnotesize \textit{Gradient:} $\nabla\ell_1 = -\nabla\log P_{\vtheta}$ \\[-1pt]
    \footnotesize \quad gradient of log-marginal-likelihood \\[-1pt]
    \footnotesize Memorizes noise; cold start $\Theta(\log p_0^{-1})$ {\scriptsize(\cref{thm:cold-start-escape})}};

\end{tikzpicture}
\end{adjustbox}
\caption{The $J_Q$ loss family is a continuum between exploitation ($q=0$) and density estimation ($q=1$) losses (poles at either end of the axis below); correspondingly, commitment is the induced gradient amplification ($P_\vtheta^{-q}$; top arrow). High $q$ resolves ambiguity (fast cold-start escape) but also memorizes noise; low $q$ resolves noise (robust filtering) but cannot escape cold start. $p_0$ denotes initial success probability; convergence results assume bounded score (\cref{sec:convergence-rates}).}
\label{fig:continuum}
\end{figure}

\paragraph{Contributions.}
\textbf{(1) The $J_Q$ loss family} (\cref{sec:background,sec:gradient-geometry,sec:convergence-rates}). $J_Q$ interpolates between a bounded, noise-robust loss at $q{=}0$ and an unbounded, mode-covering loss at $q{=}1$. Its categorical minimizer is the escort $\theta_j^* \propto \alpha_j^{\nicefrac{1}{q}}$ (\cref{thm:escort-minimizer}); $J_Q$ also enforces a dispersion penalty across examples (\cref{thm:dispersion-penalty}). The shared $P_\vtheta^{-q}$ amplification separates escape speed: $\Omega(\nicefrac{1}{p_0})$ at $q{=}0$ vs.\ $\Theta(\log\nicefrac{1}{p_0})$ at $q{=}1$ (\cref{thm:cold-start-lower,thm:cold-start-escape}).
\textbf{(2) Two gradient estimators: GARL and PAFT} (\cref{sec:practical-mc-estimates}). The dual factorization yields \textbf{Gradient-Amplified RL} (prior sampling, amplified by $P_\vtheta^{-q}$; generalizes RB-REINFORCE \citep[$q{=}0$;][]{zhou2026reinforcing} and IWAE \citep[$q{=}1$;][]{burda2016importance}) and \textbf{Posterior-Attenuated Fine-Tuning} (posterior resampling, attenuated by $P_\vtheta^{1-q}$; generalizes the EM gradient update \citep[$q{=}1$;][]{dempster1977maximum,phan2023training}). Both have bias $O(\nicefrac{q}{M P_\vtheta^q})$; GARL has lower variance, but PAFT remains stable in warm start where GARL destabilizes on HotPotQA and MuSiQue (\cref{sec:experiments}).
\textbf{(3) Empirical validation} (\cref{sec:experiments}). On FinQA, HotPotQA, and MuSiQue with exact-match training rewards: cold-start GARL at sufficiently high $q$ escapes where GRPO fails entirely for both 0.6B and 8B models. In warm start, the best stable method beats GRPO by $+7.0$ to $+13.9$ \texttt{maj@16}: GARL ($q{=}0.25$) on FinQA ($38.7$ vs.\ $27.8$) where training is stable; PAFT ($q{=}0.75$) on HotPotQA ($47.9$ vs.\ $34.0$, where GARL collapses at all tested $q$) and MuSiQue ($22.4$ vs.\ $15.4$, where GARL's higher peak does not survive training).
\section{Setup and the \texorpdfstring{$J_Q$}{JQ} Loss Family}

\label{sec:background}
\label{sec:loss-landscape}

We consider supervised conditional generation with latent reasoning trajectories: an autoregressive language model $p_{\vtheta}$ with parameters $\vtheta \in \Real^d$, trained on a dataset $\Dc$ of input-output pairs $(\vx^*, \vy^*)$. Given input $\vx$, the model samples an unannotated latent rationale $\vz$ from $p_\vtheta(\cdot \mid \vx)$ then an output $\hat{\vy} \sim p_\vtheta(\cdot \mid \vx, \vz)$, inducing the marginal $p_\vtheta(\vy \mid \vx) = \sum_{\vz} p_\vtheta(\vz, \vy \mid \vx)$. The latent $\vz$ may be a chain of thought \citep{wei2022chain}, proof trace, program, etc.; we treat it as an \emph{operational} latent mediating the output distribution.

\paragraph{Success probability and endpoint losses.}
For each supervised example, the \defn{success probability} is $P_{\vtheta} \triangleq p_{\vtheta}(\vy^* \mid \vx^*)$. We define the \textbf{exploitation loss} $J_0(\vtheta) \triangleq \E[\Dc]{1 - P_{\vtheta}}$ and \textbf{density-estimation loss} $J_1(\vtheta) \triangleq \E[\Dc]{- \log P_{\vtheta}}$, both minimized at $P_\vtheta = 1$. Under exact-match supervision $R(\hat{\vy},\vy^*) = \mathbb{I}(\hat{\vy} = \vy^*)$, $J_0 = 1 - \E[\Dc]{\text{reward}}$ (\cref{thm:rlvr-connection}), so minimizing $J_0$ maximizes expected reward.

\paragraph{The $J_Q$ family.}
The Tsallis $q$-logarithm \citep{tsallis1988possible}, $\log_q(u) = \nicefrac{(u^{1-q}-1)}{(1-q)}$ for $u \in (0, 1]$ with $\log_1(u) \triangleq \log u$, defines the per-example loss and dataset objective
\begin{align}
    \ell_q(\vtheta; \vx^*, \vy^*)
    \triangleq - \log_q P_{\vtheta}
    = \frac{1 - P_{\vtheta}^{1-q}}{1-q},
    \qquad
    J_Q(\vtheta, q) = \E[(\vx^*, \vy^*) \sim \Dc]{\ell_q(\vtheta; \vx^*, \vy^*)},
    \label{eq:single-loss}
\end{align}
recovering $J_Q(\vtheta, 0) = J_0$ and $J_Q(\vtheta, 1) = J_1$. At $q < 1$ the per-example loss is bounded and noise-robust; at $q = 1$ it is unbounded and the model fits the training distribution exactly, including label errors. Strict convexity of $-\log_q$ for $q > 0$ gives $J_Q \geq -\log_q(\E[\Dc]{P_\vtheta})$: $J_Q$ penalizes non-uniform success across examples (\emph{dispersion penalty}, \cref{thm:dispersion-penalty}). Moreover, higher-$q$ also penalizes non-uniformness on the prediction, which we formalize next.

\paragraph{$q$ as a training-time temperature.} Just as inference temperature controls output spread at decoding time, $q$ controls it at training time: $\ell_q$ penalizes non-uniform $\vtheta$ more when $q$ increases. To illustrate this point, we consider $K$-category models with empirical frequencies $\alpha_j > 0$. $J_Q$'s minimizer for such models is the \defn{escort distribution} \citep{beck1993thermodynamics} of order $\nicefrac{1}{q}$:

\begin{restatable}{theorem}{EscortMinimizer}[Minimizers of $J_Q$ in the categorical model]
\label{thm:escort-minimizer}
For $q \in (0, 1]$, the unique minimizer of $J_Q(\theta, q) = \sum_j \alpha_j(-\log_q \theta_j)$ over $\theta \in \Delta_K$ is $\theta_j^*(q) = \frac{\alpha_j^{\nicefrac{1}{q}}}{\sum_k \alpha_k^{\nicefrac{1}{q}}}$. For $q = 0$, any vertex $e_j$ with $j \in \argmax_k \alpha_k$ is optimal.
\end{restatable}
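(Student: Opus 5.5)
The plan is to treat the case $q\in(0,1)$ as a strictly convex program on the simplex and solve its KKT conditions. The case $q=1$ is the familiar cross-entropy/Gibbs inequality, and the case $q=0$ reduces to a linear program.

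For $q\in(0,1)$, write $J_Q(\theta,q)=\frac{1}{1-q}\bigl(1-\sum_j \alpha_j\theta_j^{1-q}\bigr)$, where we may assume $\sum_j\alpha_j=1$ (rescaling $\alpha$ does not change the minimizer).
\begin{itemize}
\item Minimizing $J_Q$ is equivalent to maximizing $F(\theta)=\sum_j\alpha_j\theta_j^{1-q}$ over $\Delta_K$.
\item Since $0<1-q<1$ and every $\alpha_j>0$, each map $\theta_j\mapsto\alpha_j\theta_j^{1-q}$ is strictly concave on $[0,\infty)$. Hence $F$ is strictly concave on the compact convex set $\Delta_K$, so a maximizer exists and is unique.
\item The derivative $\partial F/\partial\theta_j=(1-q)\alpha_j\theta_j^{-q}$ blows up as $\theta_j\to0^+$. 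So no boundary point can be optimal: moving a little mass into a zero coordinate gives a first-order gain that is infinite, while the loss from the coordinate giving up mass is only finite. The maximizer therefore lies in the relative interior.
\item In the interior, the Lagrange condition $(1-q)\alpha_j\theta_j^{-q}=\lambda$ for all $j$ gives $\theta_j\propto\alpha_j^{1/q}$. Normalizing yields the escort formula. Strict concavity makes this stationary point the unique global optimum.
\end{itemize}

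For $q=1$, the objective is $-\sum_j\alpha_j\log\theta_j$, which is strictly convex on the interior and equal to $+\infty$ on the boundary. The same Lagrange computation gives $\alpha_j/\theta_j=\lambda$, so $\theta^*=\alpha$, which is the escort of order $1$. Uniqueness also follows directly from Gibbs' inequality: $-\sum_j\alpha_j\log\theta_j-H(\alpha)=\mathrm{KL}(\alpha\|\theta)\ge0$, with equality iff $\theta=\alpha$.

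For $q=0$, the objective is $\sum_j\alpha_j(1-\theta_j)=1-\sum_j\alpha_j\theta_j$. This is linear in $\theta$, so it is minimized by maximizing $\sum_j\alpha_j\theta_j$ over the simplex. The maximum value is $\max_k\alpha_k$, and every vertex $e_j$ with $j\in\argmax_k\alpha_k$ attains it. Any convex combination of such vertices is also optimal, which is why the statement says only "any vertex … is optimal."

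The main obstacle is the boundary-exclusion step for $q\in(0,1)$. I will make it rigorous with a one-sided directional-derivative argument. Take a boundary point $\theta$ with $\theta_i=0$, and pick $k$ with $\theta_k>0$. Along the feasible direction $e_i-e_k$, the right derivative of $F$ at $t=0^+$ is $+\infty$ from the $\theta_i$ term, while the $\theta_k$ term contributes only the finite amount $-(1-q)\alpha_k\theta_k^{-q}$. So $F$ increases along this direction and $\theta$ is not a maximizer. Once the optimum is known to be interior, the rest is routine.
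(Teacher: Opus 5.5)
Your proposal is correct and follows the paper's route: strict convexity for uniqueness, exclusion of boundary points, Lagrange multipliers giving $\theta_j \propto \alpha_j^{1/q}$, and linearity at $q=0$. Your one-sided directional-derivative argument along $e_i - e_k$ actually justifies the paper's bare assertion that boundary points have suboptimal loss for $q<1$. Gibbs' inequality is a clean alternative check at $q=1$.
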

\begin{proof}[Proof sketch.]
Strict convexity for $q > 0$ ensures uniqueness; Lagrange multipliers yield $\theta_{k} \propto \alpha_{k}^{\nicefrac{1}{q}}$ (full proof in \cref{sec:proofs-landscape}).
\end{proof}
The escort interpolates continuously from full coverage ($q{=}1$: $\theta^* = \alpha$) to pure mode-seeking ($q \to 0$), with $q{=}1$ the unique strictly proper scoring rule in $J_Q$ (\cref{thm:properness}).

\paragraph{Gradient geometry.}
\label{sec:gradient-geometry}
All members of $J_Q$ share one per-example gradient direction, factoring through either the exploitation loss endpoint $\nabla_\vtheta \ell_0$ or the density-estimation loss endpoint $\nabla_\vtheta \ell_1$:

\begin{proposition}[Gradient geometry and dual factorization]
\label{thm:q-gradient-geometry}
For any fixed supervised example $(\vx^*, \vy^*)$ with $P_{\vtheta} > 0$ and any $q \in [0,1]$,
\begin{align}
    \nabla_{\vtheta} \ell_q(\vtheta; \vx^*, \vy^*)
    \;=\;
    \underbrace{P_{\vtheta}^{-q}}_{\text{amplify}}\,
    \nabla_{\vtheta} \ell_0(\vtheta; \vx^*, \vy^*)
    \;=\;
    \underbrace{P_{\vtheta}^{1-q}}_{\text{attenuate}}\,
    \nabla_{\vtheta} \ell_1(\vtheta; \vx^*, \vy^*).
    \label{eq:q-gradient-geometry}
\end{align}
\end{proposition}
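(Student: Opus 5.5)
The plan is a direct chain-rule computation. We treat $\ell_q$ as the composition of the scalar map $f_q(u) = -\log_q(u)$ with $P_\vtheta$. Since $P_\vtheta > 0$, every map involved is differentiable at the point of interest, so we get $\nabla_\vtheta \ell_q = f_q'(P_\vtheta)\,\nabla_\vtheta P_\vtheta$.

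The first step computes $f_q'$ on $(0,1]$.
\begin{itemize}[leftmargin=*]
\item For $q \in [0,1)$, differentiating $f_q(u) = \nicefrac{(1-u^{1-q})}{(1-q)}$ gives $f_q'(u) = -u^{-q}$.
\item For $q = 1$, we have $f_1(u) = -\log u$, so $f_1'(u) = -u^{-1}$, which is the same formula.
\end{itemize}
Hence $f_q'(u) = -u^{-q}$ for all $q \in [0,1]$. This yields the single expression
\[
\nabla_\vtheta \ell_q = -P_\vtheta^{-q}\,\nabla_\vtheta P_\vtheta .
\]

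The second step specializes this expression to the two endpoints.
\begin{itemize}[leftmargin=*]
\item At $q=0$ it reads $\nabla_\vtheta \ell_0 = -\nabla_\vtheta P_\vtheta$. Substituting gives the first equality, $\nabla_\vtheta \ell_q = P_\vtheta^{-q}\nabla_\vtheta \ell_0$.
\item At $q=1$ it reads $\nabla_\vtheta \ell_1 = -P_\vtheta^{-1}\nabla_\vtheta P_\vtheta$, i.e.\ $-\nabla_\vtheta P_\vtheta = P_\vtheta\,\nabla_\vtheta\ell_1$. Substituting gives the second equality, $\nabla_\vtheta \ell_q = P_\vtheta^{-q}\cdot P_\vtheta\,\nabla_\vtheta\ell_1 = P_\vtheta^{1-q}\nabla_\vtheta\ell_1$.
\end{itemize}

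The only point needing care is regularity: the factorization requires $P_\vtheta$ to be differentiable in $\vtheta$. We can justify this because $P_\vtheta = \sum_\vz p_\vtheta(\vz,\vy^*\mid\vx^*)$ is a sum of smooth autoregressive softmax products. If the latent space is countably infinite, we would additionally assume that differentiation under the sum is permitted (e.g.\ by dominated convergence), or restrict to bounded-length trajectories. The positivity assumption $P_\vtheta>0$ makes the negative powers well defined. Finally, because the $q=1$ case is defined separately by the limit convention $\log_1 = \log$, we verify it directly rather than by taking a limit; the derivative formula agrees with the $q<1$ branch anyway.
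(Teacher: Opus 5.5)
Your proof is correct and follows essentially the same route as the paper: the chain rule with $\frac{d}{du}\log_q(u) = u^{-q}$ gives $\nabla_\vtheta\ell_q = -P_\vtheta^{-q}\nabla_\vtheta P_\vtheta$, and the identity $-\nabla_\vtheta P_\vtheta = P_\vtheta\,\nabla_\vtheta\ell_1$ then yields the second equality. Your separate check of the $q=1$ branch and your remark on differentiating under the sum over $\vz$ are extra care that the paper leaves implicit.
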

\begin{proof}
By the chain rule and $\frac{d}{du}\log_q(u) = u^{-q}$: $\nabla_{\vtheta}\ell_q = -P_{\vtheta}^{-q}\nabla_{\vtheta}P_{\vtheta} = P_{\vtheta}^{-q}\nabla_{\vtheta}\ell_0$. Since $\nabla_{\vtheta}\ell_0 = -\nabla_{\vtheta}P_{\vtheta} = P_{\vtheta}\,\nabla_{\vtheta}\ell_1$, the second equality follows.
\end{proof}

The amplification $P_\vtheta^{-q} \in [1, \infty)$ controls both cold-start escape speed (\cref{sec:convergence-rates}) and ratio-estimator bias (\cref{sec:practical-mc-estimates}); the RL factorization motivates GARL (\cref{sec:garl}), the FT factorization motivates PAFT (\cref{sec:paft}).
\section{Commitment Dynamics under Gradient Flow}

\label{sec:convergence-rates}

Under gradient flow, escape from a cold start ($p_0 = P_{\vtheta(0)} \ll 1$) takes $\Omega(\nicefrac{1}{p_0})$ time at the exploitation pole ($q{=}0$) but only $\Theta(\log(\nicefrac{1}{p_0}))$ at the density-estimation pole ($q{=}1$). This exponential separation in $\nicefrac{1}{p_0}$ is governed by the amplification factor $P_{\vtheta}^{-q}$ and the dynamics $\dot p = p^{2-q}\|s(\vtheta)\|^2$. Our analysis is stylized: it tracks single-example success probability under continuous-time gradient flow, isolating the role of the amplification factor rather than fully modeling multi-example LM optimization.

\paragraph{Dynamics of the success probability.}
\label{sec:cold-start-dynamics}
We study gradient flow $\dot{\vtheta} = -\nabla_\vtheta \ell(\vtheta)$ \citep{su2016differential}, which isolates closed-form rates from step-size effects without requiring convexity ($\dot p \geq 0$ always). For a single example with score $s(\vtheta) \triangleq \nabla_\vtheta \log P_\vtheta$, \cref{thm:q-gradient-geometry} gives
\begin{align}
	\dot{p}
	= \nabla_\vtheta P_\vtheta \cdot \dot{\vtheta}
	= P_\vtheta^{-q}\,\|\nabla_\vtheta P_\vtheta\|^2
	= p^{2-q}\,\|s(\vtheta)\|^2,
	\label{eq:general-p-dynamics}
\end{align}
where $q$'s entire effect on convergence is captured by the exponent $2 - q$ ($\|s\|^2$ is $q$-independent).

\paragraph{Why $q$ matters at cold start.} For $p_0 \triangleq p(0) \ll 1$ and approximately constant $\|s\|$, the time to reach target $\delta$ is $T \sim \int_{p_0}^{\delta} u^{-(2-q)}\,du$. The exponent $2 - q$ sets the divergence rate as $p_0 \to 0$: at $q = 0$, $\int u^{-2}\,du \sim p_0^{-1}$; at $q = 1$, $\int u^{-1}\,du \sim \log(1/p_0)$. 

\paragraph{Cold-start escape rates.}
\label{sec:cold-start-rates}
We present the separation in two results: an $\Omega(\cdot)$ bound assuming that score is upper-bounded (training with low-$q$ is \emph{provably slow}), then a matching $\Theta(\cdot)$ rate assuming that the score is also lower-bounded.

\begin{restatable}{theorem}{ColdStartLower}[Exploitation is provably slow]
	\label{thm:cold-start-lower}
	Let $\vtheta \in \Real^d$ parameterize any differentiable model. Consider gradient flow on $\ell_q(\vtheta) = -\log_q(P_\vtheta)$, starting from $p_0 = P_{\vtheta(0)} \in (0, 1/2)$ with fixed target $\delta \in (p_0, 1/2]$. Suppose $\|s(\vtheta(t))\| \leq C \in \Real$. Then as $p_0 \to 0$:
	\begin{align*}
		T_q(p_0, \delta) &= \Omega\!\left(\frac{p_0^{-(1-q)}}{1-q}\right) \; \text{for } q \in [0,1), \\
		T_1(p_0, \delta) &= \Omega\!\left(\log\frac{1}{p_0}\right).
	\end{align*}
\end{restatable}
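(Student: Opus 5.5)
We compare the success-probability ODE from \cref{eq:general-p-dynamics} against an explicitly solvable one, and integrate the separated form to lower-bound the hitting time. Along gradient flow on $\ell_q$, \cref{thm:q-gradient-geometry} gives $\dot p = p^{2-q}\|s(\vtheta(t))\|^2$. Under the hypothesis $\|s(\vtheta(t))\|\le C$, this yields the differential inequality
\[
0 \le \dot p(t) \le C^2\, p(t)^{2-q}.
\]
Define $T_q(p_0,\delta) \triangleq \inf\{t\ge 0 : p(t)\ge\delta\}$, with the convention that $T_q=\infty$ if $\delta$ is never reached. If $T_q=\infty$ the bound is trivial, so assume $T_q<\infty$. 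Then $p$ is continuous, nondecreasing and positive on $[0,T_q]$, and $p(T_q)=\delta$.

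Since $p>0$, we divide by $p^{2-q}$ and integrate in time from $0$ to $T_q$. The change of variables $u=p(t)$ applies because $p$ is absolutely continuous; monotonicity is not needed, since $\int_0^{T} p^{-(2-q)}\dot p\,dt = G(p(T))-G(p_0)$ with $G$ an antiderivative of $u^{-(2-q)}$. This gives
\[
\int_{p_0}^{\delta} u^{-(2-q)}\,du \;\le\; C^2\,T_q(p_0,\delta).
\]
Evaluating the integral gives the two cases.
\begin{itemize}
\item For $q\in[0,1)$, $\int_{p_0}^{\delta}u^{-(2-q)}du = \frac{p_0^{-(1-q)}-\delta^{-(1-q)}}{1-q}$. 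Since $\delta$ is fixed and $p_0\to0$, the term $\delta^{-(1-q)}$ is eventually at most half of $p_0^{-(1-q)}$, so $T_q \ge \frac{1}{2C^2}\cdot\frac{p_0^{-(1-q)}}{1-q}$.
\item For $q=1$, the integral equals $\log(\delta/p_0) = \log(1/p_0) - \log(1/\delta)$. Because $\delta\le 1/2$ is fixed, this is at least $\tfrac12\log(1/p_0)$ for small $p_0$, so $T_1 \ge \frac{1}{2C^2}\log(1/p_0)$.
\end{itemize}
In both cases the constants depend only on $C$ and $\delta$, which gives the stated $\Omega(\cdot)$ bounds.

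The main technical care is at the level of the ODE rather than the inequality, which is elementary. First, $p(t)$ must stay in $(0,1)$ on $[0,T_q]$ so that $p^{-q}$ and $s$ are well defined. This holds because $p$ is nondecreasing (so it stays at least $p_0>0$) and $p<\delta\le 1/2$ before the hitting time. Second, the gradient-flow solution must exist up to $T_q$. We take this as part of the standing assumption that the model is differentiable and the flow is well posed; if the solution ceased to exist before reaching $\delta$, we would set $T_q=\infty$. Finally, the bound holds uniformly in $q$ only as stated, and the constant $\frac{1}{1-q}$ is kept explicitly so that the $q\to1$ limit is consistent with the logarithmic case.
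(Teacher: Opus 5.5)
Your proof is correct and is essentially the paper's argument: both bound $\dot p \le C^2 p^{2-q}$, integrate to obtain $T_q \ge C^{-2}\int_{p_0}^{\delta} u^{-(2-q)}\,du$, and evaluate the integral in the two cases. The only difference is cosmetic. You integrate the separated inequality directly via the fundamental theorem of calculus, which avoids the comparison principle and the monotonicity of $u \mapsto u^{2-q}$ and gives the explicit constant $1/(2C^2)$; the paper instead compares $p$ to the explicit solution $p^*$ of $\dot p^* = C^2 (p^*)^{2-q}$.
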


\begin{proof}[Proof sketch]
	From $\dot{p} = p^{2-q}\|s\|^2 \leq C^2 p^{2-q}$, the success probability grows no faster than $C^2 p^{2-q}$. Integrating: $T_q \geq \frac{1}{C^2}\int_{p_0}^{\delta} u^{-(2-q)}\,du$, which evaluates to $\Omega(\nicefrac{p_0^{-(1-q)}}{(1-q)})$.
\end{proof}

$\|s\| \leq C$ is a common regularity assumption (verified in closed form for the scalar sigmoid in \cref{sec:sigmoid-warmup}); the exploitation pole thus has escape time $\Omega(\nicefrac{1}{p_0})$ under this assumption.

\begin{restatable}{theorem}{ColdStartEscape}[Tight cold-start escape rates]
	\label{thm:cold-start-escape}
	Under the same setup as \cref{thm:cold-start-lower}, suppose additionally that $\|s(\vtheta(t))\| \geq c > 0$ throughout the trajectory. Then as $p_0 \to 0$,
	\begin{align*}
		T_q(p_0, \delta) = \Theta\!\left(\tfrac{p_0^{-(1-q)}}{1-q}\right) \text{ for } q \in [0,1), \qquad T_1(p_0, \delta) = \Theta\!\left(\log \tfrac{1}{p_0}\right),
	\end{align*}
	and consequently $T_q(p_0,\delta)/T_{q'}(p_0,\delta) \to \infty$ for any $q < q' \leq 1$.
\end{restatable}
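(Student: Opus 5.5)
The plan is to sandwich the hitting time between two integrals of the form $\int_{p_0}^{\delta} u^{-(2-q)}\,du$. The lower bound is already supplied by \cref{thm:cold-start-lower}, so only a matching upper bound and the ratio statement remain.

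\emph{Step 1: the ODE and a change of variables.} Along the trajectory, \cref{eq:general-p-dynamics} gives $\dot p = p^{2-q}\|s(\vtheta(t))\|^2$. Combining the hypotheses $c \le \|s\| \le C$ yields
\[
c^2 p^{2-q} \le \dot p \le C^2 p^{2-q}.
\]
Since $p_0>0$ and $\dot p \ge c^2 p_0^{2-q} > 0$, the function $p(t)$ is strictly increasing and continuous. Before the hitting time $T_q(p_0,\delta) = \inf\{t : p(t) \ge \delta\}$ it stays in $[p_0,\delta]$, so we may reparametrize time by $p$, giving $dt = dp/\dot p$. Therefore
\[
\frac{1}{C^2}\int_{p_0}^{\delta} u^{-(2-q)}\,du \;\le\; T_q(p_0,\delta) \;\le\; \frac{1}{c^2}\int_{p_0}^{\delta} u^{-(2-q)}\,du .
\]
The upper bound also shows the target is actually reached in finite time. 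Otherwise $p$ would satisfy $p(t)\ge$ the solution of $\dot u = c^2 u^{2-q}$, which reaches $\delta$ in finite time; equivalently, use the comparison lemma for scalar ODEs.

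\emph{Step 2: evaluate the integral.} For $q<1$ the integral equals $\frac{p_0^{-(1-q)} - \delta^{-(1-q)}}{1-q}$. Because $\delta$ is fixed and $p_0 \to 0$, the term $\delta^{-(1-q)}$ is eventually at most half of $p_0^{-(1-q)}$. Both sides are therefore $\Theta\!\big(p_0^{-(1-q)}/(1-q)\big)$, with constants $1/(2C^2)$ and $1/c^2$. For $q=1$ the integral is $\log(\delta/p_0) = \log(1/p_0) - \log(1/\delta)$, which is $\Theta(\log(1/p_0))$ as $p_0\to0$ for the same reason.

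\emph{Step 3: the ratio.} For $q<q'\le 1$, use the lower bound on $T_q$ and the upper bound on $T_{q'}$. If $q'<1$, the ratio is bounded below by a constant times $p_0^{-(q'-q)}$, and since $q'>q$ this tends to $\infty$. If $q'=1$, the ratio is bounded below by a constant times $p_0^{-(1-q)}/\log(1/p_0) \to \infty$, because any positive power of $1/p_0$ dominates the logarithm.

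The main obstacle is rigor in Step 1. One must check that the bounds $c\le\|s\|\le C$ hold on the relevant interval $[0,T_q]$, which is what the hypothesis "throughout the trajectory" grants. One must also check that the flow exists up to the hitting time. I would handle existence by assuming the flow is well defined, as the theorem's setup implicitly does, and then apply the monotone change of variables. Everything else is elementary calculus.
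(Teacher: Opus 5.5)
Your proposal is correct and follows the paper's proof. In both, the lower bound is inherited from \cref{thm:cold-start-lower}, the upper bound comes from integrating $\dot p \ge c^2 p^{2-q}$ over $[p_0,\delta]$, and the ratio claim comes from pairing the lower bound on $T_q$ with the upper bound on $T_{q'}$; the only difference is that the paper uses the ODE comparison principle where you use the equivalent monotone reparametrization $dt = dp/\dot p$.
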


The lower bound $\dot p \geq c^2 p^{2-q}$ gives the matching upper bound via the same integration (\cref{sec:proofs-convergence}). The $q$-dependent separation comes from the assumption-free factor $p^{2-q}$ in \cref{eq:general-p-dynamics}, so the pole ordering persists even where $\|s\| \geq c$ fails; exact rates for a sigmoid model are in \cref{sec:sigmoid-warmup}. Restricting the target to $\delta \leq 1/2$ keeps the trajectory away from $p \to 1$ where the score naturally vanishes for softmax parameterizations. 

\paragraph{Noise fitting is symmetric.}
The same machinery gives an exact dual: under the canonical sigmoid model, growing noise contamination from $\tilde p_0$ to a fixed target takes $T_q^{\mathrm{noise}}(\tilde p_0) = \Theta(\tilde p_0^{-(1-q)}/((1-q)\epsilon))$ for $q \in (0, 1)$ and $\Theta(\log(1/\tilde p_0)/\epsilon)$ at $q{=}1$ (\cref{thm:noise-fitting} in \cref{sec:noise-fitting}, diverging at $q{=}0$) \Dash matching cold-start escape's exponent in the small starting probability, with $\epsilon$ the only additional rate factor. So $P_\vtheta^{-q}$ accelerates clean and corrupted commitment by the same factor, and SFT-then-RL \citep{ouyang2022training,guo2025deepseek,chu2025sft} becomes a hard $q{=}1 \to q{=}0$ switch: SFT escapes cold start via $P_\vtheta^{-1}$ amplification; RL afterwards halts noise commitment ($T_q^{\mathrm{noise}} \to \infty$ at $q{=}0$). The reverse order gets neither; $J_Q$ replaces the hard switch with a smooth interpolation.
\section{Gradient Estimators for \texorpdfstring{$J_Q$}{JQ}}

\label{sec:practical-mc-estimates}

The marginal $P_\vtheta = \sum_{\vz \in \Zc} p_\vtheta(\vz, \vy^* \mid \vx^*)$ in $\nabla_\vtheta \ell_q$ is intractable, so we estimate the gradient by Monte Carlo. The dual factorization (\cref{thm:q-gradient-geometry}) yields two natural estimators:
\begin{itemize}
    \item \textbf{GARL} (\cref{sec:garl}): sample from the prior $p_{\vtheta}(\vz \mid \vx^*)$, estimate $\nabla_{\vtheta}\ell_0$ and $P_\vtheta$ from the same samples, amplify by $(\bar w_M)^{-q}$ (a plug-in estimator of the amplification factor $P_\vtheta^{-q}$).
    \item \textbf{PAFT} (\cref{sec:paft}): approximately sample from the posterior $p_{\vtheta}(\vz \mid \vx^*, \vy^*)$, estimate $\nabla_{\vtheta}\ell_1$ via teacher forcing, attenuate by $(\bar w_M)^{1-q}$ (estimating $P_\vtheta^{1-q}$).
\end{itemize}

\paragraph{Drop-in compute cost.}
Both estimators are drop-in replacements for RB-REINFORCE/RLOO at the same rollout budget: GARL adds an $O(M)$ scalar reweighting on top of RB-RLOO \citep{zhou2026reinforcing}, and PAFT adds one categorical resample over the prior weights followed by teacher forcing on already-generated tokens. Neither requires extra forward passes.

\subsection{GARL: Gradient-Amplified RL}
\label{sec:garl}

\paragraph{A plug-in Monte Carlo estimator.}
Fix a supervised example $(\vx^*,\vy^*)$ and draw $M$ i.i.d.\ latent trajectories $\vz^{(1)},\dots,\vz^{(M)} \sim p_{\vtheta}(\cdot \mid \vx^*)$. Define the per-sample likelihood weight and gradient contribution:
\begin{align}
    w_m \triangleq p_{\vtheta}(\vy^* \mid \vx^*,\vz^{(m)}), \qquad
    g_m \triangleq -w_m\,\nabla_{\vtheta}\log p_{\vtheta}(\vz^{(m)},\vy^* \mid \vx^*),
    \label{eq:def-gm}
\end{align}
with empirical means $\bar w_M \triangleq \tfrac{1}{M}\sum_m w_m$ and $\bar g_M \triangleq \tfrac{1}{M}\sum_m g_m$. By the log-trick,
\begin{align}
    \mathbb{E}[\bar w_M] = P_{\vtheta}, \qquad
    \mathbb{E}[\bar g_M] = -\sum_{\vz} \nabla_{\vtheta} p_{\vtheta}(\vz,\vy^* \mid \vx^*) = -\nabla_{\vtheta} P_{\vtheta} = \nabla_{\vtheta}\ell_0.
    \label{eq:gm-unbiased}
\end{align}
Plugging these into the RL factorization of \cref{thm:q-gradient-geometry} yields the plug-in estimator
\begin{align}
    \hat{\nabla}_{\vtheta}\ell_q(q,\vtheta;\vx^*,\vy^*,M)
    \triangleq
    \frac{\bar g_M}{(\bar w_M)^q}.
    \label{eq:mc-estimator}
\end{align}
The dataset-level estimator of $\nabla_\vtheta J_Q$ averages \cref{eq:mc-estimator} over a minibatch: GARL amplifies the RL gradient $\bar g_M$ by the plug-in estimate $(\bar w_M)^{-q}$ of $P_\vtheta^{-q}$. At the endpoints, GARL recovers RB-REINFORCE \citep[$q{=}0$;][]{zhou2026reinforcing} and the IWAE gradient estimator \citep[$q{=}1$;][]{burda2016importance}; see \cref{sec:endpoint-recovery}.

\paragraph{Update normalization.}
The per-sample weight $\nicefrac{w_m}{(\bar w_M)^q}$ (the effective reward under the RL view) has maximum $M^q$, so the centered advantage $c_m$ in \cref{eq:rloo-estimator} can range up to $M^q$ in magnitude. To keep the per-sample advantage uniformly bounded as $q$ varies, the algorithms \cref{alg:jq-gradient,alg:paft-gradient} divide by $M^q$, yielding $c_m/M^q \in [-1, 1]$. The mathematical estimators \cref{eq:rloo-estimator,eq:paft-estimator} target $\nabla_\vtheta \ell_q$ directly; the algorithm-side $1/M^q$ is equivalent to applying a $q$-independent learning rate to the bounded-advantage form (vs.\ a $q$-dependent learning rate to the unscaled form).

\paragraph{Consistency and finite-sample bias.}
\Cref{eq:mc-estimator} is a ratio estimator: it reuses the same samples in numerator and denominator, so it is biased at finite $M$ even though $\bar w_M$ and $\bar g_M$ are individually unbiased.\footnote{Assumptions 1--2 are standard regularity. Assumption~3 controls the ratio-estimator denominator at fixed $\vtheta$: for autoregressive softmax models, $w_m = \prod_{t=1}^T p_\vtheta(y^*_t \mid \cdot) \geq \epsilon_0^T$ for some $\epsilon_0 > 0$. The bound is not uniform over training, and may also shrink as $P_\vtheta \to 0$.}

\begin{restatable}{theorem}{BiasExpansion}[Consistency and bias expansion]
\label{thm:mc-bias-expansion}
Fix a supervised example $(\vx^*,\vy^*)$ and assume:
\begin{enumerate}[noitemsep]
    \item $P_{\vtheta} > 0$;
    \item $\mathbb{E}[\|g_m\|^2] < \infty$;
    \item $w_m \geq \epsilon$ a.s.\ for some $\epsilon > 0$.
\end{enumerate}
Then for any fixed $q \in [0,1]$, the estimator is consistent: $\hat{\nabla}_{\vtheta}\ell_q \xrightarrow{a.s.} \nabla_{\vtheta}\ell_q$ as $M \to \infty$. Moreover, the leading-order bias is
\begin{align}
    \mathbb{E}\!\left[
        \hat{\nabla}_{\vtheta}\ell_q
    \right]
    -
    \nabla_{\vtheta}\ell_q
    \;=\;
    \frac{q}{M P_\vtheta^{q+1}}\!\left[\tfrac{q+1}{2}\nabla_\vtheta\ell_1\,\mathbf{Var}(w_m) - \mathbf{Cov}(g_m, w_m)\right] + O(M^{-2})
    \quad \text{as } M \to \infty.
    \label{eq:bias-expansion}
\end{align}
Under additionally bounded marginal and per-trajectory scores ($\|\nabla_\vtheta\log P_\vtheta\| \leq C$, $\|\nabla_\vtheta\log p_\vtheta(\vz, \vy^* \mid \vx^*)\| \leq C'$), the bracketed term is $O(P_\vtheta)$, so the bias simplifies to $O(\nicefrac{q}{M P_\vtheta^q})$.
\end{restatable}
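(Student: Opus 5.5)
Consistency comes first and is routine. By the strong law of large numbers, $\bar w_M \to P_\vtheta$ and $\bar g_M \to \nabla_\vtheta\ell_0$ almost surely, using $\mathbb{E}\|g_m\|<\infty$ (from Assumption~2) and the fact that $w_m\in[\epsilon,1]$. The map $(g,w)\mapsto g\,w^{-q}$ is continuous on $\{w>0\}$ and $P_\vtheta>0$, so the continuous mapping theorem gives $\hat\nabla_\vtheta\ell_q \to \nabla_\vtheta\ell_0\,P_\vtheta^{-q}$ almost surely. By \cref{thm:q-gradient-geometry}, this limit equals $\nabla_\vtheta\ell_q$.

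For the bias expansion I will use the delta method on the ratio. Write $\bar w_M = P(1+\eta)$ with $\eta = (\bar w_M - P)/P$, and $\bar g_M = G + \xi$ with $G=\nabla_\vtheta\ell_0$ and $\mathbb{E}\xi = 0$. Taylor expansion of $(1+\eta)^{-q}$ gives
\begin{align*}
\hat\nabla_\vtheta\ell_q = P^{-q}(G+\xi)\Bigl(1 - q\eta + \tfrac{q(q+1)}{2}\eta^2 + R_3(\eta)\Bigr).
\end{align*}
Taking expectations, the zeroth-order term is $P^{-q}G = \nabla_\vtheta\ell_q$. The $\eta$ and $\xi$ terms vanish in expectation. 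The surviving second-order terms are $-qP^{-q}\mathbb{E}[\xi\eta] + \tfrac{q(q+1)}{2}P^{-q}G\,\mathbb{E}[\eta^2]$. Since samples are i.i.d., $\mathbb{E}[\eta^2]=\mathbf{Var}(w_m)/(MP^2)$ and $\mathbb{E}[\xi\eta] = \mathbf{Cov}(g_m,w_m)/(MP)$. Finally, substitute $G = P\,\nabla_\vtheta\ell_1$ (again \cref{thm:q-gradient-geometry}). This gives $\tfrac{q(q+1)}{2}P^{1-q}\nabla_\vtheta\ell_1\mathbf{Var}(w_m)/(MP^2) - qP^{-q}\mathbf{Cov}(g_m,w_m)/(MP)$, which is exactly the displayed $\tfrac{q}{MP^{q+1}}[\cdots]$.

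The main obstacle is showing the remainder is $O(M^{-2})$ rather than merely $o(M^{-1})$. Here Assumption~3 is essential. Since $\bar w_M\in[\epsilon,1]$, the function $w\mapsto w^{-q}$ has all derivatives uniformly bounded on the range of $\bar w_M$. So the Lagrange remainder satisfies $|R_3(\eta)|\le K|\eta|^3$ with a deterministic constant $K$, avoiding any need to control negative moments of $\bar w_M$. I then need the following moment bounds:
\begin{itemize}
\item $\mathbb{E}|\eta|^3 = O(M^{-3/2})$ and $\mathbb{E}\eta^4 = O(M^{-2})$, from bounded $w_m$ via Rosenthal/Marcinkiewicz–Zygmund bounds.
\item $\mathbb{E}[\|\xi\|\,|\eta|^2] = O(M^{-3/2})$, by Cauchy–Schwarz with $\mathbb{E}\|\xi\|^2 = O(M^{-1})$ from Assumption~2.
\item For the exact third-order terms, $\mathbb{E}[\eta^3]=O(M^{-2})$ and $\mathbb{E}[\xi\eta^2]=O(M^{-2})$, since centered third cumulants of i.i.d.\ means scale as $M^{-2}$.
\end{itemize}
This gives $O(M^{-2})$ once the expansion is carried one order further, to $\eta^3$ with a fourth-order remainder. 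If $\mathbb{E}[\|g_m\|^3]$ is needed for the third-cumulant step, I will note that it follows from $\|g_m\|\le w_m C'\le C'$ under the bounded-score addendum. Otherwise I will state the remainder as $O(M^{-3/2})$ under Assumption~2 alone, which still makes the displayed term the leading order.

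For the simplification, use $\|\nabla_\vtheta\ell_1\| = \|\nabla_\vtheta\log P\|\le C$. Since $w_m\in[0,1]$, we have $\mathbf{Var}(w_m)\le\mathbb{E}[w_m^2]\le \mathbb{E}[w_m]=P$. Also $\|g_m\|\le w_m C'$, so $\|\mathbf{Cov}(g_m,w_m)\|\le \mathbb{E}[\|g_m\|w_m] + \|\mathbb{E}g_m\|\,P \le C'P + C'P^2$. Hence the bracket is $O(P)$, and the bias becomes $O(q/(MP^q))$.
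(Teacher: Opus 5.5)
Your proposal is correct and is essentially the paper's proof. Both use the same delta-method split into $\nabla_\vtheta\ell_0\,h(\bar w_M)$ and $(\bar g_M-\nabla_\vtheta\ell_0)\,h(\bar w_M)$ with $h(a)=a^{-q}$, expanded to third order, with i.i.d.\ cumulant scaling for the $O(M^{-2})$ moments and Cauchy--Schwarz for the $\xi$-remainder.

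The one real difference is in the remainder. You bound the Lagrange remainder globally via Assumption~3, which is valid since both $\bar w_M$ and $P_\vtheta$ lie in $[\epsilon,1]$. The paper instead splits on $\{\bar w_M \geq P_\vtheta/2\}$ and applies Hoeffding on the complement, which confines the $\epsilon$-dependent constants to an exponentially small term.

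Your fallback to an $O(M^{-3/2})$ remainder is unnecessary. The term $\mathbb{E}[(g_m-\nabla_\vtheta\ell_0)(w_m-P_\vtheta)^2]$ is finite under Assumption~2 alone because $|w_m-P_\vtheta|\le 1$. The $\xi$-remainder is $O(M^{-2})$ via $\sqrt{\mathbb{E}\|\xi\|^2\,\mathbb{E}\eta^6}$, so the corresponding bullet should involve $|\eta|^3$, not $|\eta|^2$.
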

At $q{=}0$ the bias vanishes \emph{exactly} for all $M$: the estimator reduces to the unbiased sample mean $\bar g_M$ (\cref{eq:gm-unbiased}). The proof is a delta-method expansion of $\bar g_M / \bar w_M^q$ around $(P_\vtheta, \nabla\ell_0)$ (\cref{sec:proofs-estimators}). The $J_Q$-specific feature is the joint dependence on $q$ and $P_\vtheta$: the same $P_\vtheta^{-q}$ that enables fast escape (\cref{thm:cold-start-lower,thm:cold-start-escape}) degrades estimator quality at the same rate, predicting that intermediate $q$ outperforms both endpoints \Dash confirmed in \cref{sec:experiments}. The expansion is a fixed-$P_\vtheta$, large-$M$ asymptotic; in cold start it identifies the direction of degradation, not a uniform bound.

\paragraph{Control variate.}
\label{sec:variance-reduction}
We apply the standard leave-one-out control variate \citep{kool2019buy} to GARL's score-function term, centering the per-sample coefficient $w_m/(\bar w_M)^q$ against $(\bar w_{\neg m})^{1-q}$ where $\bar w_{\neg m} \triangleq \tfrac{1}{M-1}\sum_{j \neq m} w_j$ (full RLOO estimator and derivation in \cref{sec:rloo-derivation}). The control variate preserves the bias of \cref{thm:mc-bias-expansion} (\cref{thm:rloo-bias}). At $q = 0$ this recovers the Rao--Blackwellized RLOO of \citet{zhou2026reinforcing}; at $q = 1$ the centered weight becomes $w_m/\bar w_M - 1$, a self-normalizing baseline. Pseudocode is in \cref{alg:jq-gradient}.

\subsection{PAFT: Posterior-Attenuated Fine-Tuning}
\label{sec:paft}

GARL samples from the prior and amplifies by $P_\vtheta^{-q}$ \Dash sometimes massively. The FT factorization (\cref{eq:q-gradient-geometry}) offers an alternative: sample from the posterior $p_\vtheta(\vz \mid \vx^*, \vy^*)$ \Dash where rationales already agree with $\vy^*$ \Dash and attenuate by $P_\vtheta^{1-q} \in [0,1]$.

\paragraph{Posterior form of the gradient.}
Expanding $\nabla_\vtheta\ell_1 = -\nabla_\vtheta\log P_\vtheta$ as a posterior expectation:
\begin{align}
    \nabla_\vtheta\ell_q
    &= -P_\vtheta^{1-q} \cdot
    \E[\vz \sim p_\vtheta(\vz \mid \vx^*, \vy^*)]
        {\nabla_\vtheta \log p_\vtheta(\vz, \vy^* \mid \vx^*)}.
    \label{eq:posterior-gradient}
\end{align}
Each sample gradient is standard SFT (teacher forcing) on a \emph{semantically coherent} (input, rationale, answer) triple: the rationale is posterior-weighted toward agreement with $\vy^*$.

\paragraph{Approximate posterior sampling.}
The posterior is intractable for autoregressive models. We use importance resampling \citep[IR;][]{rubin1988using}, which reuses GARL's pool and weights: resample $K$ indices $r_1, \ldots, r_K \in \{1, \ldots, M\}$ with replacement, with $r_k$ drawn proportional to $w_{r_k}$. The PAFT estimator is
\begin{align}
    \hat{\nabla}_{\text{PAFT}}
    = -(\bar w_M)^{1-q} \cdot \frac{1}{K}\sum_{k=1}^{K}
    \nabla_\vtheta \log p_\vtheta(\vz^{(r_k)}, \vy^* \mid \vx^*).
    \label{eq:paft-estimator}
\end{align}
At $q = 1$, the attenuation vanishes ($(\bar w_M)^{1-q} = 1$) and PAFT recovers the EM gradient update \Dash the M-step gradient evaluated over E-step posterior samples \citep{dempster1977maximum,phan2023training}; \cref{sec:endpoint-recovery} lists all endpoint reductions.

\paragraph{Bias and variance.}
Importance resampling preserves the gradient mean: PAFT inherits GARL's leading bias expansion (\cref{thm:paft-bias}), which under the bounded-score conditions of \cref{thm:mc-bias-expansion} simplifies to $O(\nicefrac{q}{M P_\vtheta^q})$, and has strictly higher variance by the law of total variance (\cref{thm:paft-variance}; full derivations in \cref{sec:paft-bias-proof}).

Yet PAFT can produce better training dynamics: GARL's lower variance comes from mixing bad rationales with small weights, while PAFT excludes them before the gradient is formed. Posterior-resampling noise preserves the FT endpoint's semantic coherence, making PAFT more stable at warm start despite higher variance (\cref{sec:experiments}); see \cref{alg:paft-gradient}.
\section{Empirical Validation}

\label{sec:experiments}

We validate the theoretical predictions and empirical effectiveness of GARL and PAFT on three reasoning benchmarks \Dash FinQA \citep{chen-etal-2021-finqa}, HotPotQA \citep{yang2018hotpotqa}, and MuSiQue \citep{trivedi2021musique} \Dash using post-trained Qwen 3 0.6B and 8B models \citep{yang2025qwen3technicalreport} under both cold-start and warm-start conditions.

\subsection{Experimental setup}

Our experiments operate without annotated rationales (output-level supervision only); fixed-$q$ GARL and PAFT are first-step demonstrations of what the $J_Q$ perspective enables, with annealing schedules over $q$ left to future work. We organize the empirical findings around three research questions: \textbf{RQ1} — can fixed-$q$ $J_Q$ optimization escape cold start? \textbf{RQ2} — is $J_Q$ optimization still useful in warm-start? \textbf{RQ3} — is PAFT empirically more stable than GARL in warm-start?

\paragraph{Scenarios.} \emph{Warm start} evaluates whether $J_Q$ optimization remains useful when the model is already task-aligned \Dash either via SFT on annotated rationales (when available) or via instruction prompting alone (when not; e.g., \citealp{wei2022chain, guo2025deepseek}). We use the prompting alternative: task inputs are natural-language prompts with task descriptions and answer-formatting instructions; the un-adapted model can occasionally produce correct answers, so reward is not sparse. \emph{Cold start} uses linearized $(\vx^*, \vy^*)$ pairs with no task description and no formatting instructions; the model must discover both how to solve the problem and how to format the answer, and initial $P_\vtheta$ is very low.

\paragraph{Datasets, methods, and evaluation.}
We sample training, validation, and test subsets from Huggingface. GRPO, GARL, and PAFT all use $M = 32$ rollouts per prompt during training for Qwen 3 0.6B, and $M=16$ for 8B. All methods use 16 samples per prompt at evaluation. GARL (\cref{alg:jq-gradient}) uses the RLOO variance reduction (\cref{eq:rloo-estimator}); PAFT (\cref{alg:paft-gradient}) resamples $K = M$ trajectories from the same pool. We enforce per-rationale token budgets following \citet{muennighoff2025s1simpletesttimescaling}. We evaluate $q \in \{0, 0.25, 0.5, 0.75, 1\}$ at 0.6B, and $q \in \{0, 0.75, 0.85, 1\}$ at 8B (where the cold-start escape threshold shifts upward; \cref{sec:cold-start-results}). Training uses exact-match rewards (\cref{sec:background}); evaluation uses relaxed substring match (correct if $\vy^*$ appears as a substring of $\hat{\vy}$). We report \texttt{p@1} (single-sample accuracy), \texttt{p@}$k$ (best-of-$k$, rewards coverage), and \texttt{m@}$k$ (majority vote over $k$ samples \citep{wang2023selfconsistency}). Reported test numbers are taken from the checkpoint with highest validation \texttt{m@16}; unless otherwise marked with $\pm$, numbers are single-seed. Additional experiment setup details are in \cref{sec:experimental-details}.

\subsection{RQ1: Can fixed-$q$ optimization escape cold start?}
\label{sec:cold-start-results}

Cold start tests whether commitment $P_\vtheta^{-q}$ determines escape from a sparse-reward regime (\cref{thm:cold-start-escape}).

\begin{table}[t]
\centering
\caption{Cold-start results across 3 benchmarks $\times$ 2 scales (Qwen 3; \citep{yang2025qwen3technicalreport}). At 0.6B, GRPO and GARL with $q \leq 0.5$ fail entirely on every benchmark; only $q \geq 0.75$ escapes, with $q{=}0.75$ outperforming $q{=}1$ on \texttt{p@1}. At 8B, the threshold shifts to $q \geq 0.85$, and the cold-start ordering replicates qualitatively. Warm-start prompted GRPO baselines are included as a cross-regime reference: cold-start GARL at $q \in \{ 0.75, 0.85\}$ exceeds them on every metric across all three benchmarks (a confounded comparison: see body discussion). Best per scale $\times$ benchmark $\times$ metric in bold. For Qwen 3 0.6B GRPO (warm) and GARL $q{=}0.75$ results, we report mean and standard deviation over 3 different seeds. Note: FinQA's 8B GRPO (warm) \texttt{m@16} inverts the scale ordering ($19.6 < 27.8$ at 0.6B), while HotPotQA and MuSiQue scale as expected; 8B numbers are single-seed.}
\label{tab:cold-start}
\label{tab:cold-vs-warm}
\label{tab:finqa-cold-start}
\begin{adjustbox}{max width=\textwidth}
\begin{tabular}{@{}l ccc ccc ccc@{}}
\toprule
& \multicolumn{3}{c}{FinQA} & \multicolumn{3}{c}{HotPotQA} & \multicolumn{3}{c}{MuSiQue} \\
\cmidrule(lr){2-4} \cmidrule(lr){5-7} \cmidrule(lr){8-10}
Method & \texttt{p@1} & \texttt{p@16} & \texttt{m@16} & \texttt{p@1} & \texttt{p@16} & \texttt{m@16} & \texttt{p@1} & \texttt{p@16} & \texttt{m@16} \\
\midrule
\multicolumn{10}{l}{\textit{Qwen 3 0.6B (cold-start)}} \\
GRPO & 0 & 0 & 0 & 0 & 0 & 0 & 0 & 0 & 0 \\
GRPO (warm) & 20.6 $\pm 2.0$ & 48.5 $\pm 0.7$ & 27.8 $\pm 1.1$ & 29.6 $\pm 0.6$ & 56.8 $\pm 1.6$ & 34.0 $\pm 0.7$ & 12.9 $\pm 1.2$ & 35.7 $\pm 1.9$ & 15.4 $\pm 0.4$ \\
GARL $q{=}0$ (RB-RLOO) & 0 & 0 & 0 & 0 & 0 & 0 & 0 & 0 & 0 \\
GARL $q{=}0.25$ & 0 & 0 & 0 & 0 & 0 & 0 & 0 & 0 & 0 \\
GARL $q{=}0.5$ & 0 & 0 & 0 & 0 & 0 & 0 & 0 & 0 & 0 \\
GARL $q{=}0.75$ & \textbf{30.5} $\pm 0.3$ & \textbf{61.1} $\pm 0.5$ & \textbf{38.6} $\pm 0.6$ & \textbf{53.4} $\pm 0.6$ & 74.1 $\pm 1.0$ & \textbf{57.4} $\pm 0.9$ & \textbf{27.5} $\pm 0.9$ & \textbf{58.2} $\pm 0.7$ & \textbf{35.6} $\pm 1.5$ \\
GARL $q{=}1$ & 21.9 & 58.7 & 33.5 & 48.7 & \textbf{75.5} & 56.6 & 21.6 & 58.1 & 32.5 \\
\midrule
\multicolumn{10}{l}{\textit{Qwen 3 8B (cold-start)}} \\
GRPO & 0 & 0 & 0 & 0 & 0 & 0 & 0 & 0 & 0 \\
GRPO (warm) & 18.7 & 26.2 & 19.6 & 34.9 & 50.5 & 39.6 & 26.7 & 51.9 & 31.1 \\
GARL $q{=}0$ & 0 & 0 & 0 & 0 & 0 & 0 & 0 & 0 & 0 \\
GARL $q{=}0.75$ & 0 & 0 & 0 & 0 & 0 & 0 & 0 & 0 & 0 \\
GARL $q{=}0.85$ & \textbf{45.0} & 75.2 & \textbf{52.9} & \textbf{64.8} & \textbf{81.5} & \textbf{68.6} & \textbf{58.7} &  78.8 & 62.9  \\
GARL $q{=}1$ & 38.4 & \textbf{75.6} & 50.1 & 61.6 & 81.4 & 67.9 & 57.1  & \textbf{79.6} & \textbf{64.5} \\
\bottomrule
\end{tabular}
\end{adjustbox}
\end{table}
\paragraph{Yes, but only above a critical $q$ that rises with model scale.}
GRPO, Rao--Blackwellized RLOO ($q{=}0$), and all $q \leq 0.5$ fail entirely on Qwen 3 0.6B; only $q \geq 0.75$ escapes. Rao--Blackwellization \citep{zhou2026reinforcing} reduces variance but cannot accelerate escape: at $q{=}0$ the dynamics $\dot p = p^2\|s\|^2$ have no amplification (cf.\ \cref{fig:cold-start-dynamics} in \cref{sec:additional-figures}). The bottleneck is gradient \emph{amplification}, not variance. The sharp transition at $q = 0.75$ matches \cref{thm:cold-start-lower}: the lower bound $\Omega(p_0^{-(1-q)})$ grows rapidly as $q$ decreases, so the training budget sets a critical $q$ below which escape fails. Scaling to Qwen 3 8B \citep{yang2025qwen3technicalreport} shifts this threshold to $q \geq 0.85$ ($q{=}0.75$ now fails), consistent with a lower effective initial success probability or harder optimization regime at larger scale (mechanism not directly measured). Both $q{=}0.75$ and $q{=}1$ escape at 0.6B, but $q{=}0.75$ achieves higher \texttt{p@1} on every benchmark: the escape-vs-bias tradeoff of \cref{thm:mc-bias-expansion}: $q{=}1$'s stronger amplification enables faster escape but produces higher-bias estimates. Coverage tells a subtler story: $q{=}1$'s broader mode-covering edges $q{=}0.75$ on HotPotQA \texttt{p@16} ($75.5$ vs.\ $74.1$) \Dash extra diversity that does not survive majority voting.

\paragraph{Side-result: cold-start GARL is competitive with prompted warm-start GRPO.}
\Cref{tab:cold-vs-warm} shows GARL at $q{=}0.75$ (no prompts) matching or exceeding prompted warm-start GRPO on every metric across all three benchmarks, with \texttt{p@1} margins of $+9.9$ (FinQA), $+23.8$ (HotPotQA), $+14.6$ (MuSiQue). More strikingly, it also matches or beats the best stable warm-start \texttt{m@16} of \cref{tab:warm-start-maj16} \Dash HotPotQA $57.4$ vs.\ PAFT's $47.9$ ($+9.5$); MuSiQue $35.6$ vs.\ $22.4$ ($+13.2$); FinQA $38.6$ vs.\ $38.7$ (tie) \Dash despite warm-start having both prompts and training. We treat this as hypothesis-generating rather than evidence that prompts are unnecessary: cold- and warm-start runs differ in more than prompts (input formatting, output constraints, target distribution), and isolating the prompt factor needs a controlled ablation we leave to future work.

\subsection{RQ2 \& RQ3: Warm-start utility and PAFT vs GARL stability}
\label{sec:warm-start-results}

Warm start tests whether GARL and PAFT help when $P_\vtheta$ is not negligible and standard RL already makes progress, and whether PAFT is the more stable estimator we hypothesized.\footnote{All warm-start comparisons use exact-match training rewards. PAFT is not evaluated at cold start: $P_\vtheta^{1-q} \approx 0$ suppresses the gradient, and importance resampling suffers particle degeneracy (effective sample size $\approx 1$) when all $w_m$ are near zero.}

\begin{table}[t]
\centering
\footnotesize
\caption{Warm-start \texttt{m@16} across three benchmarks (exact-match training rewards; evaluation uses substring match). Base = un-adapted Qwen 3 0.6B evaluated with the same prompted inputs as the trained methods. GARL at $q = 0$ recovers RB-RLOO \citep{zhou2026reinforcing}. \textbf{GARL entries for MuSiQue and HotPotQA are peak-before-collapse} (validation accuracy collapses to zero before end of training; see \cref{sec:warm-start-results}); only FinQA GARL and all PAFT entries are steady-state. Best steady-state result per benchmark in bold: GARL at $q{=}0.25$ on FinQA, PAFT at $q{=}0.75$ on HotPotQA and MuSiQue. The best stable method beats GRPO by $+7.0$ to $+13.9$ points. For GRPO we report average \texttt{m@16} numbers over $3$ runs (see \cref{tab:cold-start}).}
\label{tab:warm-start-maj16}
\begin{tabular}{@{}lrrr@{}}
\toprule
Method & FinQA & HotPotQA & MuSiQue \\
\midrule
Base (no training, prompted) & 12.6 & 22.2 & 8.9 \\
GRPO & 27.8 & 34.0 & 15.4 \\
\addlinespace
GARL ($q = 0$, RB-RLOO) & 38.3 & 21.6 & 9.1 \\
GARL ($q = 0.25$) & \textbf{38.7} & 22.9 & 24.3 \\
GARL ($q = 0.75$) & 37.6 & 46.8 & 19.7 \\
\addlinespace
PAFT ($q = 0.25$) & 26.6 & 47.0 & 9.0 \\
PAFT ($q = 0.75$) & 28.6 & \textbf{47.9} & \textbf{22.4} \\
\bottomrule
\end{tabular}
\normalsize
\end{table}

\paragraph{RQ2: yes, $J_Q$ at low $q$ gives sizable gains over GRPO when training is stable.}
On FinQA, GARL is stable at all tested $q$, so the cost of high $q$ \Dash estimator bias $O(\nicefrac{q}{M P_\vtheta^q})$ (\cref{thm:mc-bias-expansion}) and noise memorization (\cref{thm:noise-fitting}) \Dash outweighs its amplification benefit, and \texttt{m@16} is roughly flat across $q \in [0, 0.75]$ with the best at $q{=}0.25$ ($38.7$, $+10.9$ over GRPO). At $q{=}0$ this recovers RB-RLOO of \citet{zhou2026reinforcing}, which beats GRPO on FinQA ($+10.5$) but underperforms on HotPotQA ($-12.4$) and MuSiQue ($-6.3$): the conditional reward alone does not generalize. Raising $q$ lifts peak accuracy on those benchmarks (HotPotQA $21.6 \to 46.8$, MuSiQue $9.1 \to 19.7$), but peaks do not survive training, motivating RQ3.

\paragraph{RQ3: yes, PAFT is more stable than GARL on HotPotQA and MuSiQue.}
GARL on HotPotQA warm-start collapses at every $q$ tested: validation accuracy peaks early then drops to zero before training ends (e.g., $q{=}0.25$: validation peaks around step 50 and reaches zero by step 100, with the best-validation checkpoint giving test \texttt{m@16} of $22.9$ in \cref{tab:warm-start-maj16}; $q{=}0.75$ follows the same pattern with test $46.8$; higher $q$ peaks higher but collapses sooner). HotPotQA exhibits broader instability \Dash GRPO also degrades, peaking $\sim$$37.4$ around step 100 and declining steadily to $\sim$$5.0$ \Dash but GARL's collapse is qualitatively different: a sharp drop to literal zero rather than a gradual decline. PAFT shows neither pattern, reaching \textbf{$47.9$} \texttt{m@16} on HotPotQA (best warm-start, $+13.9$ over GRPO) and \textbf{$22.4$} on MuSiQue ($+7.0$), and remaining stable; \cref{fig:hotpotqa-val-q25} (in \cref{sec:additional-figures}) compares GARL and PAFT validation curves at matched $q{=}0.25$. We do not have a verified mechanism for the GARL-specific zero-collapse: candidate explanations include pathwise-term corruption (GARL updates $p_\vtheta(\vy^* \mid \vx^*, \vz)$ on every sampled $\vz$, including incoherent ones; PAFT only on resampled coherent rationales) and HotPotQA-specific overfitting (also visible in GRPO). Collapse timing appears to correlate with \emph{latent-rationale variance} $\mathrm{Var}_{\vz}[w(\vz)]$ under the prior, ranking FinQA (none) $<$ MuSiQue (late) $<$ HotPotQA (early); direct measurement and a pathwise-zeroed ablation are left to future work.

\paragraph{Speed vs.\ stability.}
PAFT at $q{=}0.25$ underperforms GRPO on MuSiQue ($9.0$ vs.\ $15.4$), but its validation curve is still rising at end of training: the $P_\vtheta^{0.75}$ attenuation heavily down-weights hard instances, slowing learning without destabilizing it. The GARL-vs-PAFT trade-off is thus speed vs.\ stability — PAFT gives up per-step signal but avoids the destabilization observed in GARL on HotPotQA and MuSiQue. Raising $q$ to $0.75$ recovers speed without compromising stability: PAFT $q{=}0.75$ delivers the best warm-start HotPotQA result ($47.9$) and the honest MuSiQue recommendation ($22.4$ steady-state vs.\ GARL's $24.3$ peak-before-collapse). PAFT additionally acts as an automatic curriculum: only the easiest rationales pass the resampling filter early, broadening as $P_\vtheta$ grows.
\section{Discussion and Future Work}

\label{sec:conclusion}

The Tsallis loss continuum $J_Q$ smooths SFT-then-RLVR into a single parameter $q$ controlling per-instance commitment $P_\vtheta^{-q}$, recovering the pipeline as a stepwise $q{=}1 \to q{=}0$ schedule and enabling training without annotated rationales via intermediate $q$ (related work in \cref{sec:related-work}). The dual factorization (\cref{thm:q-gradient-geometry}) yields complementary estimators: GARL breaches GRPO's $\Omega(1/p_0)$ cold-start bottleneck via prior-sampling amplification; PAFT remains stable in warm start via posterior-sampling attenuation where GARL destabilizes (HotPotQA, MuSiQue).

\paragraph{A three-phase post-training recipe.}
The continuum prescribes a regime-dependent recipe: at cold start ($P_\vtheta \approx 0$), GARL at large $q$ ($\geq 0.75$, scaling up with model size) breaches the $\Omega(1/p_0)$ bottleneck (PAFT degenerates here); in warm start, GARL at low $q$ where stable (FinQA), PAFT at $q \geq 0.75$ otherwise (HotPotQA, MuSiQue); as $P_\vtheta \to 1$, the bias shrinks and annealing $q \to 0$ recovers the unbiased RB-RLOO estimator. Validating these switches empirically is future work.

\paragraph{Limitations.}
Main experiments use Qwen 3 0.6B, three benchmarks, fixed $q$. The cold-start theorems are scale-agnostic and the cold-start ordering replicates at Qwen 3 8B across all three benchmarks (\cref{sec:experiments}); the warm-start GARL collapse / PAFT stability finding is verified only at 0.6B (8B ongoing). The three-phase recipe is theory; annealed-$q$ schedules are unvalidated. The convergence analysis is stylized (single-example, gradient flow, bounded score) and assumes exact-match supervision; general rewards are open. Future directions in \cref{sec:future-directions}.

\bibliography{example_paper}
\bibliographystyle{plainnat}

\newpage
\appendix
\renewcommand{\theHtheorem}{A\arabic{section}.\arabic{theorem}}
\renewcommand{\theHproposition}{A\arabic{section}.\arabic{theorem}}
\renewcommand{\theHlemma}{A\arabic{section}.\arabic{theorem}}
\renewcommand{\theHcorollary}{A\arabic{section}.\arabic{theorem}}
\renewcommand{\theHdefinition}{A\arabic{section}.\arabic{theorem}}
\renewcommand{\theHassumption}{A\arabic{section}.\arabic{theorem}}
\renewcommand{\theHremark}{A\arabic{section}.\arabic{theorem}}
\renewcommand{\theHequation}{A\arabic{section}.\arabic{equation}}
\section{Related Work}

\label{sec:related-work}

\paragraph{$q$-log losses and continua.}
The Tsallis $q$-logarithm originates in non-extensive statistical mechanics \citep{tsallis1988possible}; escort distributions were studied by \citet{beck1993thermodynamics}. \citet{ferrari2010maximum} introduced maximum $L_q$-likelihood (MLqE), which reweights the score by $f(X;\theta)^{1-q}$, trading a small loss of asymptotic efficiency for outlier robustness; the PAFT gradient \cref{eq:posterior-gradient} is the marginal-likelihood analog of this weighted score. \citet{zhang2018generalized} proposed generalized cross-entropy for noisy labels, an instance of the same family at the prediction level; our escort minimizer (\cref{thm:escort-minimizer}) gives the precise mechanism. Concurrently, \citet{wang2025deft} apply the deformed-log family at the \emph{token} level for SFT; their token-level gate $p^\alpha$ is the single-token specialization of our example-level $P_\vtheta^{-q}$, but their $p$ is an exact softmax probability whereas $P_\vtheta$ is an intractable marginal. Tsallis entropy has also been used as a policy regularizer in RL \citep{lee2018sparse,Nachum2018PathCL}; we use it in the loss function rather than as a policy regularizer. \citet{tajwar2026maximum} concurrently propose MaxRL, an RL-to-ML continuum via Maclaurin truncation of $\log p$; their estimator is unbiased for the truncated objective but exactly zero when no sample succeeds, while GARL targets the true $q$-loss and always has nonzero gradient since $w_m > 0$.

\paragraph{RL--MLE bridges and latent-variable training for reasoning.}
The RL-as-inference connection \citep{Levine2018ReinforcementLA,norouzi2016reward,guu2017language} treats MLE and RL as distinct frameworks; we embed them as endpoints of a single continuously parameterized family. R\'{e}nyi variational inference \citep{li2016renyi} provides a complementary continuum that tightens the ELBO toward $-\log P_\vtheta$, the target $J_Q$ shares at $q{=}1$. On the latent-variable side, RLVR and GRPO \citep{guo2025deepseek,shao2024deepseekmath} optimize expected reward; STaR \citep{Zelikman2022star} bootstraps reasoning by generating and filtering rationales; TRICE \citep{phan2023training} and CoVRL \citep{wen2026coupledvariationalreinforcementlearning} are ELBO-based variational methods at the $q{=}1$ pole (TRICE via MCMC-EM; CoVRL via composite prior-posterior with hybrid sampling); SPG \citep{ding17} samples from a reward-tilted proposal $q_\vtheta(\vz \mid \vx, \vy) \propto p_\vtheta(\vz \mid \vx)\exp(R(\vz \mid \vy))$ for cold-start sequence-level RL, coinciding with the posterior under $\log$-likelihood reward. At $q{=}1$, PAFT recovers SPG's gradient and TRICE's EM gradient update over posterior samples; CoVRL further hybridizes PAFT (posterior) with GARL (prior, IWAE) via composite sampling. STaR's rejection-sampling strategy is a hard-acceptance variant of PAFT's importance resampling (\cref{sec:endpoint-recovery}). The $J_Q$ continuum extends these with the $\Omega(\nicefrac{1}{p_0}) \to \Theta(\log(\nicefrac{1}{p_0}))$ separation across $q$ and the dual factorization through GARL.

\paragraph{Gradient estimators and verifier-free training.}
GARL recovers RB-REINFORCE \citep[$q{=}0$;][]{zhou2026reinforcing} and the IWAE gradient \citep[$q{=}1$;][]{burda2016importance}. \citet{rainforth2018tighter} showed IWAE's inference-network gradient SNR shrinks as $M$ grows, motivating doubly reparameterized variants \citep{roeder2017sticking,tucker2019doubly}; our bias expansion $O(\nicefrac{q}{M P_\vtheta^q})$ exposes a related phenomenon along the $J_Q$ continuum, with intermediate $q$ balancing escape against estimator quality. \citet{zhou2026reinforcing} introduce VeriFree, the RB-REINFORCE estimator GARL extends; while Rao--Blackwellization reduces variance, \cref{sec:experiments} shows it does not address the cold-start escape bottleneck. Both GARL and PAFT are verifier-free across the $J_Q$ continuum. Finally, \citet{yue2025does} observed that RLVR narrows the reasoning capability boundary during training; our framework attributes this to mode-seeking at $q{=}0$ (\cref{thm:endpoint-behavior}), with PAFT (\cref{sec:paft}) an empirically more stable alternative to GARL during warm-start training (\cref{sec:experiments}).

\section{Proofs for \texorpdfstring{\cref{sec:background}}{Setup and Background}: Setup and Background}
\label{sec:proofs-background}

\begin{proposition}[RLVR connection]
\label{thm:rlvr-connection}
Under the conditional model of \cref{sec:background} and exact-match reward $R(\hat{\vy},\vy^*)=\mathbb{I}(\hat{\vy} = \vy^*)$, the expected reward equals $\mathbb{E}_\Dc[P_{\vtheta}]$; consequently $J_0(\vtheta) = 1 - \mathbb{E}_\Dc[P_{\vtheta}]$, and minimizing $J_0$ is equivalent to maximizing expected reward.
\end{proposition}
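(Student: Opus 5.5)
The plan is to compute the expected reward for a single example directly from the generative process, then average over $\Dc$. Fix a supervised pair $(\vx^*,\vy^*)$. Under the conditional model of \cref{sec:background}, a rollout draws $\vz \sim p_\vtheta(\cdot\mid\vx^*)$ and then $\hat{\vy}\sim p_\vtheta(\cdot\mid\vx^*,\vz)$. The per-example expected reward is therefore
\begin{align*}
\mathbb{E}_{\vz,\hat{\vy}}\big[R(\hat{\vy},\vy^*)\big]
= \sum_{\vz}\sum_{\hat{\vy}} p_\vtheta(\vz\mid\vx^*)\,p_\vtheta(\hat{\vy}\mid\vx^*,\vz)\,\mathbb{I}(\hat{\vy}=\vy^*).
\end{align*}

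The indicator collapses the inner sum to the single term $\hat{\vy}=\vy^*$. This leaves
\begin{align*}
\sum_{\vz} p_\vtheta(\vz,\vy^*\mid\vx^*) = p_\vtheta(\vy^*\mid\vx^*) = P_\vtheta,
\end{align*}
which is exactly the marginal defined in \cref{sec:background}.

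Next, take the expectation over $(\vx^*,\vy^*)\sim\Dc$. This gives expected reward $=\mathbb{E}_\Dc[P_\vtheta]$. By linearity, $J_0(\vtheta)=\mathbb{E}_\Dc[1-P_\vtheta]=1-\mathbb{E}_\Dc[P_\vtheta]$. Since $J_0$ equals one minus the expected reward, it is an affine, strictly decreasing function of the expected reward. The two objectives therefore share the same set of optimizers, and minimizing $J_0$ is equivalent to maximizing expected reward.

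There is no real obstacle here; the only point needing care is justifying the interchange of sums over the latent space $\Zc$. Every term is nonnegative and the total is bounded by $1$, so Tonelli's theorem licenses the interchange even when $\Zc$ is countably infinite.
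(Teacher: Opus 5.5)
Your proposal is correct and matches the paper's proof essentially step for step: you expand the per-example expected reward as a double sum over $(\vz,\hat{\vy})$, let the indicator pick out $\hat{\vy}=\vy^*$ to recover the marginal $P_\vtheta$, and then average over $\Dc$. Your remarks on the affine, decreasing relation between $J_0$ and expected reward, and on nonnegativity (Tonelli) for countably infinite $\Zc$, spell out details the paper leaves implicit.
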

\begin{proof}
    For a fixed example $(\vx^*, \vy^*)$,
    \begin{align*}
        & \E[\substack{\vz \sim p_{\vtheta}(\cdot \mid \vx^*), \\ \hat{\vy} \sim p_{\vtheta}(\cdot \mid \vx^*, \vz)}]{R(\hat{\vy},\vy^*)} \\ & \qquad = \sum_{\substack{\vz \in \Zc, \\ \vy \in \Yc}} \Big[ p_{\vtheta}(\vz \mid \vx^*) \\ & \qquad \quad \quad \cdot p_{\vtheta}(\vy \mid \vx^*, \vz) \mathbb{I}({\vy} = \vy^*) \Big].
    \end{align*}
    The indicator picks out the correct output, giving
    \begin{align*}
        \E[\substack{\vz \sim p_{\vtheta}(\cdot \mid \vx^*),\\ \hat{\vy} \sim p_{\vtheta}(\cdot \mid \vx^*, \vz)}]{R(\hat{\vy},\vy^*)} &= \sum_{{\vz \in \Zc}} p_{\vtheta}(\vz \mid \vx^*) p_{\vtheta}(\vy^* \mid \vx^*, \vz) \\
        &= {P_{\vtheta}}.
    \end{align*}
    Taking an expectation over training examples from $\Dc$, we have
    \begin{align*}
        \E[\substack{(\vx^*, \vy^*) \sim \Dc \\ \vz \sim p_{\vtheta}(\cdot \mid \vx^*), \\ \hat{\vy} \sim p_{\vtheta}(\cdot \mid \vx^*, \vz)}]{R(\hat{\vy},\vy^*)} &= \E[{(\vx^*, \vy^*) \sim \Dc}]{P_{\vtheta}}.
    \end{align*}
\end{proof}

\section{Proofs for \texorpdfstring{\cref{sec:loss-landscape}}{Loss Landscape}: Loss Landscape}
\label{sec:proofs-landscape}

\begin{proposition}[Dispersion penalty]
\label{thm:dispersion-penalty}
For $q > 0$, $J_Q(\vtheta, q) \geq -\log_q(\bar{P})$, where $\bar{P} \triangleq \E[(\vx^*, \vy^*) \sim \Dc]{P_{\vtheta}}$ is the mean success probability across examples, with equality if and only if $P_{\vtheta}$ is constant across all examples in $\Dc$.
\end{proposition}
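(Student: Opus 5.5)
The plan is a direct application of Jensen's inequality to the scalar function $\phi_q(u) \triangleq -\log_q(u) = \frac{1-u^{1-q}}{1-q}$ on $(0,1]$, with $\phi_1(u) = -\log u$. By \cref{eq:single-loss}, $J_Q(\vtheta,q) = \E[\Dc]{\phi_q(P_\vtheta)}$, where $P_\vtheta$ is viewed as a random variable induced by drawing $(\vx^*,\vy^*) \sim \Dc$. Jensen for a convex $\phi_q$ gives $\E[\Dc]{\phi_q(P_\vtheta)} \ge \phi_q(\E[\Dc]{P_\vtheta}) = -\log_q(\bar P)$. That is exactly the claimed inequality.

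The first step is to establish strict convexity of $\phi_q$ on $(0,1]$ for $q>0$. Using $\frac{d}{du}\log_q(u) = u^{-q}$ (already used in \cref{thm:q-gradient-geometry}), we get $\phi_q'(u) = -u^{-q}$ and $\phi_q''(u) = q\,u^{-q-1}$. This is strictly positive on $(0,1]$ exactly when $q>0$, and the case $q=1$ gives $u^{-2}$, consistent with $-\log$. We also note that at $q=0$, $\phi_0(u) = 1-u$ is affine. Then equality always holds, which explains the restriction $q>0$.

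Next we handle well-definedness. If some examples have $P_\vtheta = 0$, then at $q<1$ the function $\phi_q$ extends continuously to $u=0$ with value $\tfrac{1}{1-q}$ and stays strictly convex on $[0,1]$, so Jensen still applies. At $q=1$ the left side is $+\infty$ and the inequality is trivial. We also have $\bar P \in [0,1]$, so $-\log_q(\bar P)$ is defined; if $\bar P = 0$, then all $P_\vtheta=0$ and equality holds trivially.

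The final step, and the only mildly delicate one, is the equality case. For a strictly convex $\phi$, Jensen's inequality holds with equality iff the random variable is almost surely constant under $\Dc$, which here means $P_\vtheta$ is constant across the examples in $\Dc$. To be self-contained for a finite or empirical $\Dc$, I would argue directly with the supporting line. Take the tangent at $\bar P$, $\phi_q(u) \ge \phi_q(\bar P) + \phi_q'(\bar P)(u-\bar P)$, with strict inequality for $u \neq \bar P$ by strict convexity. Averaging over $\Dc$ kills the linear term, and any example with positive mass and $P_\vtheta \neq \bar P$ makes the inequality strict. The endpoint $\bar P$ near $0$ needs a one-sided derivative or the continuous-extension remark; I expect this boundary bookkeeping to be the main, though minor, obstacle. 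The converse direction is immediate.
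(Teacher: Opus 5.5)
Your proposal is correct and is essentially the paper's proof: strict convexity of $-\log_q$ on $(0,1]$ from $\frac{d^2}{du^2}(-\log_q u) = q\,u^{-q-1} > 0$, then Jensen's inequality together with its equality condition for strictly convex functions. Your treatment of examples with $P_\vtheta = 0$ and your explicit supporting-line argument for the equality case are sound additions that the paper omits, since it works on $(0,1]$ and cites Jensen's equality case directly. The obstacle you anticipate near $\bar P = 0$ does not actually arise: for any $\bar P \in (0,1]$ the derivative exists (one-sided at $\bar P = 1$), and you already handle $\bar P = 0$ separately.
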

\begin{proof}
For $q > 0$, the function $h_q(u) = -\log_q(u) = \frac{1 - u^{1-q}}{1 - q}$ is strictly convex on $(0,1]$, since $h_q''(u) = q\, u^{-q-1} > 0$. Applying Jensen's inequality:
\begin{align*}
    J_Q(\vtheta, q)
    &=
    \E[(\vx^*, \vy^*) \sim \Dc]{h_q(P_{\vtheta})} \\
    &\geq
    h_q\!\bigl(\E[(\vx^*, \vy^*) \sim \Dc]{P_{\vtheta}}\bigr)
    =
    -\log_q(\bar{P}),
\end{align*}
with equality iff $P_{\vtheta}$ is constant across all examples.
\end{proof}

\EscortMinimizer*
\begin{proof}
\emph{Case $q \in (0,1]$.}
Since $h_q$ is strictly convex for $q > 0$, the objective is strictly convex on the interior of $\Delta_K$, and the minimizer is unique. Since all $\alpha_j > 0$, the minimizer lies in the interior (any boundary point has infinite loss for $q = 1$ and suboptimal loss for $q < 1$), so we can use Lagrange multipliers for the equality constraint $\sum_j \theta_j = 1$:
\begin{align*}
    -\alpha_j \theta_j^{-q} - \lambda = 0
    \quad \Longrightarrow \quad
    \alpha_j \theta_j^{-q} = \mu
    \quad \text{for all } j,
\end{align*}
where $\mu \triangleq -\lambda > 0$. Solving: $\theta_j = (\alpha_j / \mu)^{1/q}$. The constraint $\sum_j \theta_j = 1$ yields $\mu^{1/q} = \sum_k \alpha_k^{1/q}$, giving $\theta_j^*(q) = \alpha_j^{1/q} / \sum_k \alpha_k^{1/q}$ as in \cref{thm:escort-minimizer}.

\emph{Case $q = 0$.} The objective $J_Q(\vtheta, 0) = 1 - \sum_j \alpha_j \theta_j$ is linear, minimized at any vertex $e_j$ with $j \in \argmax_k \alpha_k$.
\end{proof}

\begin{corollary}[Endpoint behavior and monotone sharpening]
\label{thm:endpoint-behavior}
Under the categorical model:
\begin{enumerate}
    \item \textbf{Density-estimation pole} ($q = 1$): $\theta_j^*(1) = \alpha_j$. The model exactly recovers the data distribution.
    \item \textbf{Exploitation pole} ($q \to 0^+$): assuming a unique mode $j^* = \argmax_k \alpha_k$, $\theta_j^*(q) \to \mathbb{I}(j = j^*)$. The model concentrates all mass on the most frequent output.
    \item \textbf{Monotone sharpening}: for $0 < q' < q \leq 1$ and $\alpha_j > \alpha_k$, $\theta_j^*(q')/\theta_k^*(q') > \theta_j^*(q)/\theta_k^*(q)$.
\end{enumerate}
\end{corollary}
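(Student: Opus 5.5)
All three parts follow from the closed form of \cref{thm:escort-minimizer}, $\theta_j^*(q) = \alpha_j^{1/q}/\sum_k \alpha_k^{1/q}$ for $q \in (0,1]$. The key observation is that the normalizer cancels in ratios:
\begin{align*}
    \frac{\theta_j^*(q)}{\theta_k^*(q)} = \left(\frac{\alpha_j}{\alpha_k}\right)^{1/q}.
\end{align*}
We first compute this ratio and then read off each claim from it.

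\textbf{Density-estimation pole.} Substitute $q=1$ into the closed form. This gives $\theta_j^*(1) = \alpha_j/\sum_k \alpha_k$. Since the $\alpha_j$ are empirical frequencies, $\sum_k \alpha_k = 1$, so $\theta_j^*(1) = \alpha_j$. If unnormalized weights were intended, the statement should be read as $\theta^*(1) \propto \alpha$.

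\textbf{Exploitation pole.} Fix $j \neq j^*$. Uniqueness of the mode gives $\alpha_j/\alpha_{j^*} < 1$. Hence
\begin{align*}
    \theta_j^*(q)/\theta_{j^*}^*(q) = (\alpha_j/\alpha_{j^*})^{1/q} \to 0 \quad \text{as } q \to 0^+,
\end{align*}
because $1/q \to \infty$. Since $\theta_{j^*}^*(q) \le 1$, this yields $\theta_j^*(q) \to 0$ for every $j \neq j^*$. There are finitely many categories and the entries sum to one, so $\theta_{j^*}^*(q) \to 1$.

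Equivalently, write $\theta_{j^*}^*(q) = \bigl(1+\sum_{k\ne j^*}(\alpha_k/\alpha_{j^*})^{1/q}\bigr)^{-1}$. Each term in the sum vanishes in the limit, which gives the same conclusion.

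\textbf{Monotone sharpening.} Let $r = \alpha_j/\alpha_k$. The hypothesis $\alpha_j > \alpha_k$ gives $r > 1$, so $\log r > 0$. The map $t \mapsto r^t = e^{t\log r}$ is then strictly increasing. For $0 < q' < q$ we have $1/q' > 1/q$, and therefore $r^{1/q'} > r^{1/q}$. This is exactly the claimed inequality.

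\textbf{Main obstacle.} None of the steps is deep. The only point needing care is the limit in the exploitation pole. It relies on uniqueness of the mode and on finiteness of $K$, so that the finite sum of vanishing ratio terms vanishes. With a tied mode, the limit would instead be uniform over the tied argmax set.
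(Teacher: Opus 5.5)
Your proposal is correct and follows the paper's own proof: everything is read off the closed-form escort minimizer through the normalizer-free ratio $\theta_j^*(q)/\theta_k^*(q) = (\alpha_j/\alpha_k)^{1/q}$. Your extra care fills in steps the paper leaves implicit. In part 1 you use $\sum_k \alpha_k = 1$. In part 2 you pass from the ratio to the limit via $\theta_j^*(q) \le (\alpha_j/\alpha_{j^*})^{1/q}$ and finiteness of $K$.
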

\begin{proof}
Part (1): $\nicefrac{1}{q} = 1$. Part (2): $(\alpha_j/\alpha_{j^*})^{1/q} \to 0$ for $j \neq j^*$. Part (3): $\theta_j^*/\theta_k^* = (\alpha_j/\alpha_k)^{1/q}$, increasing in $\nicefrac{1}{q}$.
\end{proof}

\begin{corollary}[Propriety]
\label{thm:properness}
The Tsallis $q$-logarithmic scoring rule is strictly proper if and only if $q = 1$.
\end{corollary}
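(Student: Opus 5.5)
The argument is a direct corollary of \cref{thm:escort-minimizer}, using the standard definition: the scoring rule $S(\theta, j) = -\log_q \theta_j$ is strictly proper if, for every true distribution $\alpha \in \Delta_K$, the expected score $\sum_j \alpha_j(-\log_q \theta_j)$ is uniquely minimized at $\theta = \alpha$. This expected score is exactly the categorical objective $J_Q(\theta, q)$ of \cref{thm:escort-minimizer}, with empirical frequencies replaced by the true distribution. So propriety reduces to asking when the escort $\theta^*(q)$ coincides with $\alpha$ for all $\alpha$.

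\emph{Sufficiency ($q = 1$).} I would invoke \cref{thm:escort-minimizer} for $\alpha$ with full support. It gives the unique minimizer $\theta^*_j(1) = \alpha_j / \sum_k \alpha_k = \alpha_j$, matching part (1) of \cref{thm:endpoint-behavior}.

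For $\alpha$ on the boundary of the simplex, the theorem as stated assumes $\alpha_j > 0$, so I would handle this case directly via Gibbs' inequality. The difference of expected log scores is
\[
\sum_j \alpha_j \log(\alpha_j/\theta_j) = \mathrm{KL}(\alpha \,\|\, \theta) \ge 0,
\]
with equality iff $\theta = \alpha$. Any $\theta$ that puts zero mass where $\alpha_j > 0$ gives infinite loss, so uniqueness is preserved. This boundary case is the only mildly delicate step, and Gibbs' inequality disposes of it without Lagrange multipliers.

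\emph{Necessity ($q \in [0,1)$).} One counterexample $\alpha$ suffices. I would take $K = 2$ and $\alpha = (a, 1-a)$ with $a \in (1/2, 1)$.

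For $q \in (0,1)$, \cref{thm:escort-minimizer} gives the unique minimizer
\[
\theta^*_1 = \frac{a^{1/q}}{a^{1/q} + (1-a)^{1/q}}.
\]
Since $1/q > 1$ and $a > 1-a$, the ratio satisfies $\theta^*_1/\theta^*_2 = (a/(1-a))^{1/q} > a/(1-a)$. This is the monotone-sharpening statement of \cref{thm:endpoint-behavior}, part (3), comparing $q$ against $1$. Hence $\theta^* \neq \alpha$, and the truthful report $\alpha$ does not even minimize the expected score.

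For $q = 0$, \cref{thm:escort-minimizer} says the minimizer is the vertex $e_1$. The loss is linear, $1 - \sum_j \alpha_j \theta_j$, so $\alpha$ scores $1 - a^2 - (1-a)^2$, which is strictly worse than $1 - a$ at $e_1$ since $a^2 + (1-a)^2 < a$. So the rule is not even weakly proper there.

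Combining the two directions shows strict propriety holds exactly at $q = 1$.
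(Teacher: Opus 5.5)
Your proof is correct and follows the paper's route: you reduce to the escort minimizer of \cref{thm:escort-minimizer} and observe that the normalized $\alpha^{1/q}$ coincides with $\alpha$ only at $q=1$. Your version is more careful than the paper's one-line argument in three places: you treat boundary $\alpha$ at $q=1$ via Gibbs' inequality, you handle $q=0$ by direct computation (the escort formula is undefined there), and you exhibit an explicit non-uniform $\alpha$, which the paper's ``iff'' tacitly needs because the escort of a uniform $\alpha$ equals $\alpha$ for every $q$.
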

\begin{proof}
By \cref{thm:escort-minimizer}, the maximizer of $\E[y \sim \alpha]{\log_q(\theta_y)}$ is $\theta_j^* \propto \alpha_j^{1/q}$, which equals $\alpha$ iff $q = 1$. For $q \in (0, 1)$ the true distribution $\alpha$ is not even a maximizer (the rule is not proper at all), let alone the unique one.
\end{proof}

The robustness counterpart under label noise \Dash both static (where the escort minimizer concentrates) and dynamic (how fast the model gets there) \Dash appears in \cref{sec:noise-fitting}.

\section{Proofs for \texorpdfstring{\cref{sec:convergence-rates}}{Commitment Dynamics under Gradient Flow}: Commitment Dynamics under Gradient Flow}
\label{sec:proofs-convergence}

\subsection{Warm-up: exact analysis on the sigmoid model}
\label{sec:sigmoid-warmup}

Before proving the general results, we work through the scalar sigmoid model $P(\theta) = \sigma(\theta) = (1 + e^{-\theta})^{-1}$ as a warm-up. This model admits exact closed-form escape times that validate the $\Theta(\cdot)$ bounds in \cref{thm:cold-start-escape}.

Under gradient flow on $\ell_q(\theta) = -\log_q(\sigma(\theta))$, the parameter evolves as $\dot{\theta} = P(\theta)^{-q}P'(\theta)$. Since $P'(\theta) = P(\theta)(1-P(\theta))$, the chain rule gives:
\begin{align*}
    \dot{p} = [P'(\theta)]^2 P(\theta)^{-q} = p^{2-q}(1-p)^2.
\end{align*}
This is a special case of the general dynamics (\cref{eq:general-p-dynamics}) with score norm $\|s(\theta)\|^2 = (1-p)^2$, which satisfies $\|s\|^2 \in [(1-\delta)^2, 1]$ on $p \in [p_0, \delta]$ \Dash confirming the bounded score assumption.

The separable ODE gives the exact escape time:
\begin{align}
    T_q(p_0, \delta) = \int_{p_0}^{\delta} \frac{du}{u^{2-q}(1-u)^2}.
    \label{eq:exact-time}
\end{align}
We evaluate this integral using a dominant/remainder decomposition. Write $(1-u)^{-2} = 1 + r(u)$ where $r(u) = \frac{2u-u^2}{(1-u)^2}$. On $u \in [0,\delta]$ with $\delta \leq 1/2$, we have $0 \leq r(u) \leq 8u$. Substituting and distributing:
\begin{align*}
    T_q(p_0, \delta)
    =
    \underbrace{\int_{p_0}^{\delta} \frac{du}{u^{2-q}}}_{\text{dominant}}
    +
    \underbrace{\int_{p_0}^{\delta} \frac{r(u)}{u^{2-q}}\,du}_{\text{remainder}}.
\end{align*}

\emph{Case $q \in (0,1)$.} The dominant integral evaluates to $\frac{p_0^{-(1-q)} - \delta^{-(1-q)}}{1-q} = \frac{p_0^{-(1-q)}}{1-q}(1 + o(1))$. The remainder satisfies $0 \leq \int r(u)\,u^{-(2-q)}\,du \leq 8\int u^{q-1}\,du = \frac{8\delta^q}{q}$, a constant. So the remainder is negligible and $T_q = \frac{p_0^{-(1-q)}}{1-q}(1+o(1))$.

\emph{Case $q = 0$.} The dominant integral gives $\frac{1}{p_0}(1+o(1))$. The remainder is $O(\log(1/p_0))$, still negligible compared to $1/p_0$. So $T_0 = \frac{1}{p_0}(1+o(1))$.

\emph{Case $q = 1$.} The dominant integral is $\log(1/p_0) + \log\delta$. The remainder satisfies $\int r(u)\,u^{-1}\,du \leq 8(\delta-p_0) = O(1)$. So $T_1 = \log(1/p_0)(1+o(1))$.

Note that the sigmoid model yields exact $1+o(1)$ asymptotics (not just $\Theta(\cdot)$) because $\|s\|^2 = (1-p)^2 \to 1$ as $p \to 0$, so the score norm converges to a known constant. This is stronger than the general theorem, which only assumes bounded score norms.

\subsection{Proof of \texorpdfstring{\cref{thm:cold-start-lower}}{Cold-start lower bound}: Exploitation is provably slow}

\ColdStartLower*
\begin{proof}
From \cref{eq:general-p-dynamics}, $\dot{p} = p^{2-q}\|s(\vtheta)\|^2 \leq C^2\,p^{2-q}$. By the ODE comparison principle (since $u \mapsto u^{2-q}$ is nondecreasing on $(0,1]$), $p(t) \leq p^*(t)$ where $p^*$ solves $\dot{p}^* = C^2 (p^*)^{2-q}$ with $p^*(0) = p_0$. So $p$ reaches $\delta$ no sooner than $p^*$:
\begin{align*}
    T_q \geq \frac{1}{C^2}\int_{p_0}^{\delta} \frac{du}{u^{2-q}}.
\end{align*}
For $q \in [0,1)$, the integral evaluates to $\frac{p_0^{-(1-q)} - \delta^{-(1-q)}}{1-q} = \frac{p_0^{-(1-q)}}{1-q}(1+o(1))$, giving $T_q = \Omega(p_0^{-(1-q)}/(1-q))$.

For $q = 1$, the integral is $\log(\delta/p_0) = \log(1/p_0)(1+o(1))$, giving $T_1 = \Omega(\log(1/p_0))$.
\end{proof}

\subsection{Proof of \texorpdfstring{\cref{thm:cold-start-escape}}{Cold-start tight bounds}: Tight cold-start escape rates}

\ColdStartEscape*
\begin{proof}
The lower bound on time ($\Omega$) follows from \cref{thm:cold-start-lower}. For the upper bound, the additional assumption $\|s\| \geq c > 0$ gives $\dot{p} \geq c^2\,p^{2-q}$; by the ODE comparison principle, $p(t) \geq p_*(t)$ where $p_*$ solves $\dot{p}_* = c^2 (p_*)^{2-q}$, so $p$ reaches $\delta$ no later than $p_*$:
\begin{align*}
    T_q \leq \frac{1}{c^2}\int_{p_0}^{\delta} \frac{du}{u^{2-q}}.
\end{align*}
This integral evaluates to $\frac{p_0^{-(1-q)}}{1-q}(1+o(1))$ for $q \in [0,1)$ and $\log(1/p_0)(1+o(1))$ for $q = 1$. Combined with the lower bound, $T_q = \Theta(p_0^{-(1-q)}/(1-q))$ for $q < 1$ and $T_1 = \Theta(\log(1/p_0))$.

\emph{Speedup ratio.} For $q < q' < 1$: $T_q/T_{q'} = \Theta(p_0^{-(q'-q)}) \to \infty$. For $q < 1$ and $q' = 1$: $T_q/T_1 = \Theta(p_0^{-(1-q)}/\log(1/p_0)) \to \infty$.
\end{proof}

\subsection{Near-optimality convergence (supplementary result)}

\begin{proposition}[Near-optimality convergence is $q$-independent]
\label{thm:near-optimality}
Suppose that near optimality, $\|s(\vtheta)\|^2$ depends on $\vtheta$ only through $P_\vtheta$ (i.e., $\|s(\vtheta)\|^2 = h(P_\vtheta)$ for some function $h$). Then for $\epsilon_0 \ll 1$ and $\epsilon_1 < \epsilon_0$, the time to improve from $P_\vtheta = 1 - \epsilon_0$ to $P_\vtheta = 1 - \epsilon_1$ satisfies
\begin{align*}
    T_q(1-\epsilon_0, 1-\epsilon_1) = T_{q'}(1-\epsilon_0, 1-\epsilon_1)\bigl(1 + O(\epsilon_0)\bigr)
\end{align*}
for all $q, q' \in [0,1]$. That is, the convergence time is the same for all members of the $J_Q$ family up to a correction that vanishes as $\epsilon_0 \to 0$.
\end{proposition}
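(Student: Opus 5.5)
The plan is to reduce everything to the scalar ODE \cref{eq:general-p-dynamics}, exactly as in the proofs of \cref{thm:cold-start-lower,thm:cold-start-escape}. Under the hypothesis $\|s(\vtheta)\|^2 = h(P_\vtheta)$, the success probability obeys the autonomous equation $\dot p = p^{2-q} h(p)$. Since $\dot p \geq 0$ and $h>0$ along the trajectory (if $h$ vanished, no member of the family would move, and the statement would be vacuous or trivially equal), $p$ is strictly increasing. Separation of variables then gives the exact escape time
\begin{align*}
    T_q(1-\epsilon_0, 1-\epsilon_1) = \int_{1-\epsilon_0}^{1-\epsilon_1} \frac{du}{u^{2-q}\,h(u)}.
\end{align*}
The key point is that $h$ is the same function for every $q$: the score $s = \nabla_\vtheta \log P_\vtheta$ does not depend on the loss, so $q$ enters only through the factor $u^{-(2-q)} = u^{-2}\cdot u^{q}$.

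Next I compare the integrands for $q$ and $q'$ pointwise. Their ratio is $u^{q-q'}$. On the interval $u \in [1-\epsilon_0, 1]$ with $|q-q'|\leq 1$, this ratio lies between $(1-\epsilon_0)^{|q-q'|}$ and $(1-\epsilon_0)^{-|q-q'|}$. Both bounds are $1 + O(\epsilon_0)$ uniformly in $q,q'\in[0,1]$; concretely, $(1-\epsilon_0)^{-1} \leq 1+2\epsilon_0$ for $\epsilon_0\leq 1/2$. Because the integrands are positive, integrating the pointwise sandwich gives
\begin{align*}
    (1-\epsilon_0)\,T_{q'} \;\leq\; T_q \;\leq\; (1-\epsilon_0)^{-1}\,T_{q'},
\end{align*}
which is exactly $T_q = T_{q'}(1+O(\epsilon_0))$. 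Notably, the constant in $O(\cdot)$ is independent of $h$, $\epsilon_1$, and $q,q'$.

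The main obstacle is justifying that the trajectory in $\vtheta$-space actually induces the one-dimensional ODE, i.e.\ that the time is a function of $p$ alone. Under the hypothesis $\|s\|^2=h(P_\vtheta)$ this is immediate from \cref{eq:general-p-dynamics}. I also need finiteness of the integral, which requires $h$ to be positive and integrable in reciprocal on $[1-\epsilon_0,1-\epsilon_1]$. For example, if $h$ is continuous and positive on this compact interval, the integral is finite. If $T_{q'}$ is infinite, then both times are infinite simultaneously by the same sandwich, and the claim holds trivially. I will note these conditions as implicit regularity assumptions rather than try to remove them.
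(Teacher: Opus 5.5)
Your proposal is correct and follows essentially the same route as the paper: separate variables in $\dot p = p^{2-q}\|s\|^2$ (the paper writes it in $\epsilon = 1-p$), note that the integrands for $q$ and $q'$ differ only by the factor $p^{q-q'}$, and bound that factor by $1+O(\epsilon_0)$ on the interval. Your explicit sandwich $(1-\epsilon_0)\,T_{q'} \leq T_q \leq (1-\epsilon_0)^{-1}\,T_{q'}$ is a slightly cleaner substitute for the paper's Taylor expansion of $\log(1-\epsilon)$. You also state explicitly something the paper leaves implicit: the hypothesis $\|s\|^2 = h(P_\vtheta)$ is exactly what makes the two integrands comparable pointwise.
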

\begin{proof}
Write $\epsilon = 1-p$ with $\epsilon \ll 1$. From \cref{eq:general-p-dynamics}, $\dot{\epsilon} = -(1-\epsilon)^{2-q}\,\|s(\vtheta)\|^2 < 0$. Since $\epsilon$ decreases over time, the convergence time from $\epsilon_0$ to $\epsilon_1$ is:
\begin{align*}
    T_q = \int_{\epsilon_1}^{\epsilon_0} \frac{d\epsilon}{(1-\epsilon)^{2-q}\,\|s(\vtheta)\|^2}.
\end{align*}
For any $q, q' \in [0,1]$, the integrands of $T_q$ and $T_{q'}$ differ by the factor $(1-\epsilon)^{q-q'}$. We bound this factor on $\epsilon \in [\epsilon_1, \epsilon_0]$ with $\epsilon_0 \ll 1$. Using the Taylor expansion $\log(1-\epsilon) = -\epsilon - \epsilon^2/2 - \cdots$:
\begin{align*}
    \log(1-\epsilon)^{q-q'}
    &= (q-q')\log(1-\epsilon) \\
    &= (q-q')\bigl(-\epsilon - \tfrac{\epsilon^2}{2} - \cdots\bigr).
\end{align*}
Since $|q-q'| \leq 1$:
\begin{align*}
    \bigl|\log(1-\epsilon)^{q-q'}\bigr|
    \leq \epsilon + \tfrac{\epsilon^2}{2} + \cdots
    = O(\epsilon).
\end{align*}
Exponentiating and using $e^x = 1 + x + O(x^2) = 1 + O(\epsilon)$ for $x = O(\epsilon)$, we get $(1-\epsilon)^{q-q'} = 1 + O(\epsilon)$. Since $\epsilon \leq \epsilon_0$ on $[\epsilon_1, \epsilon_0]$, the integrands of $T_q$ and $T_{q'}$ differ by a multiplicative $1 + O(\epsilon_0)$ factor, giving $T_q / T_{q'} = 1 + O(\epsilon_0)$.
\end{proof}

\subsection{Noise-fitting rate under symmetric label noise}
\label{sec:noise-fitting}

The cold-start escape rates (\cref{thm:cold-start-lower,thm:cold-start-escape}) measure how fast the model commits to correct supervision under the $J_Q$ amplification $P_\vtheta^{-q}$. The symmetric question is how fast the model commits to \emph{incorrect} supervision: the same amplification drives both, giving the following dynamical formulation of robustness under label noise.

\paragraph{Noise-contamination setup.}
We work with a two-label categorical model, chosen to expose the mechanism in the simplest possible setting. For a single input $\vx^*$, the model predicts one of two labels $\{c, k\}$ with probabilities $p_\vtheta(c \mid \vx^*) = p$ and $p_\vtheta(k \mid \vx^*) = 1 - p$. We instantiate the parameterization with the sigmoid $p = \sigma(\theta)$ used in \cref{sec:sigmoid-warmup}, under which $s \triangleq \nabla_\theta \log p = \tilde p$ and $\|s\|^2 = \tilde p^2$. The target label is \emph{corrupted}: with probability $1 - \epsilon$ it equals the clean value $c$, and with probability $\epsilon \in (0, 1/2)$ it flips to the noise value $k$, giving $\tilde\alpha = (1-\epsilon, \epsilon)$. The restriction to two labels is cosmetic: in the $N$-label categorical model with symmetric noise $\tilde\alpha = (1-\epsilon)\alpha + \epsilon\cdot\mathrm{Unif}$, conditioning on the two-subset $\{j^*, k\}$ containing the clean mode $j^*$ and any fixed wrong label $k$ reduces to this binary setting.

Let $p(t) = p_\vtheta(c \mid \vx^*)$ denote the clean-mode probability under gradient flow on $J_Q(\vtheta) = \mathbb{E}_{y \sim \tilde\alpha}[\ell_q(p_\vtheta(y \mid \vx^*))]$, and let $\tilde p(t) = 1 - p(t)$ denote the noise contamination. The cold-start analysis (\cref{thm:cold-start-escape}) assumed a non-vanishing score $\|s\| \geq c_* > 0$; the analogous lower bound fails near $p = 1$, where the sigmoid score vanishes linearly in $\tilde p$, so we substitute the actual scaling $\|s\|^2 = \tilde p^2$ rather than treating $\|s\|$ as a constant.

\paragraph{The escort asymptote.}
Differentiating $J(p) = (1-\epsilon)\ell_q(p) + \epsilon\ell_q(1-p)$ gives $J'(p) = -(1-\epsilon)p^{-q} + \epsilon\tilde p^{-q}$. Gradient flow on the sigmoid yields
\begin{align}
\dot{\tilde p} = -\dot p = [\epsilon\tilde p^{-q} - (1-\epsilon)(1-\tilde p)^{-q}]\,p^2\,\tilde p^2.
\label{eq:noise-fitting-dynamics}
\end{align}
For $q > 0$, the dynamics have a unique stable equilibrium at
\begin{align}
\tilde p_*(q) \;=\; (\epsilon/(1-\epsilon))^{1/q}\,(1 + o(1)) \quad \text{as } \epsilon \to 0,
\label{eq:noise-asymptote}
\end{align}
obtained by solving $J'(p) = 0$ ($\|s\|^2$ cancels at equilibrium, so $\tilde p_*(q)$ does not depend on the parameterization). This equilibrium coincides with the static escort minimizer from \cref{thm:escort-minimizer} applied to $\tilde\alpha$: at $q = 1$, $\tilde p_*(1) = \epsilon$ (the model fits observed noise exactly); as $q \to 0$, $\tilde p_*(q) \to 0$ (the model concentrates on the clean mode, paralleling \cref{thm:endpoint-behavior}). The escort is both where $J_Q$ is minimized (static) and where gradient flow converges (dynamic).

The noise-to-clean ratio $\epsilon\tilde p^{-q} / [(1-\epsilon)(1-\tilde p)^{-q}]$ is monotone decreasing in $\tilde p$ on $(0, 1)$: it diverges as $\tilde p \to 0$ (noise term dominates near the clean mode), equals $1$ at $\tilde p = \tilde p_*(q)$ (equilibrium), and vanishes as $\tilde p \to 1$. So for $\tilde p \ll \tilde p_*(q)$ \Dash the regime of small noise contamination \Dash the noise term in \cref{eq:noise-fitting-dynamics} dominates by an arbitrarily large factor. This drives the asymptotic scaling.

\begin{proposition}[Noise-fitting rate]
\label{thm:noise-fitting}
Fix $q \in (0, 1]$. Under the setup above, starting from $\tilde p(0) = \tilde p_0$ with $\tilde p_0 \ll \tilde p_*(q)$, the time $T_q^{\mathrm{noise}}(\tilde p_0)$ to reach a fixed target $\eta$ (with $\tilde p_0 \ll \eta \leq \tilde p_*(q)$, $\eta$ independent of $\tilde p_0$) satisfies, as $\tilde p_0 \to 0$:
\begin{align}
T_q^{\mathrm{noise}}(\tilde p_0) = \Theta\!\left(\frac{\tilde p_0^{-(1-q)}}{(1-q)\,\epsilon}\right) \text{ for } q \in (0,1), \qquad T_1^{\mathrm{noise}}(\tilde p_0) = \Theta\!\left(\frac{\log(1/\tilde p_0)}{\epsilon}\right).
\label{eq:noise-fitting-rate}
\end{align}
The speedup ratio for $0 < q < q' \leq 1$ diverges: $T_q^{\mathrm{noise}}(\tilde p_0) / T_{q'}^{\mathrm{noise}}(\tilde p_0) = \Theta(\tilde p_0^{-(q'-q)}) \to \infty$ as $\tilde p_0 \to 0$. At $q = 0$, adopting the convention $\tilde p^0 \equiv 1$, the dynamics \cref{eq:noise-fitting-dynamics} reduce to $\dot{\tilde p} = -(1-2\epsilon)\,p^2\,\tilde p^2 < 0$ everywhere (for $\epsilon < 1/2$), so any positive $\tilde p_0$ decays monotonically toward 0: $T_0^{\mathrm{noise}}(\tilde p_0) = \infty$ for any target $\eta > \tilde p_0$.
\end{proposition}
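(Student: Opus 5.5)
The plan is to treat \cref{eq:noise-fitting-dynamics} as a one-dimensional autonomous ODE in $\tilde p$ and compute the escape time as a separable integral, exactly as in the sigmoid warm-up of \cref{sec:sigmoid-warmup}. Write the drift as $F(\tilde p) = [\epsilon\tilde p^{-q} - (1-\epsilon)(1-\tilde p)^{-q}]\,(1-\tilde p)^2\,\tilde p^2$. For $q\in(0,1]$ and $\tilde p<\tilde p_*(q)$ the bracket is positive, because the noise-to-clean ratio is monotone decreasing and equals $1$ at $\tilde p_*$. So $\tilde p$ increases monotonically, and
\begin{align*}
T_q^{\mathrm{noise}}(\tilde p_0) = \int_{\tilde p_0}^{\eta} \frac{du}{F(u)}.
\end{align*}

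The key step is a two-sided sandwich of $F$ near zero. The upper bound is immediate: $F(u)\le \epsilon\, u^{2-q}$, since the subtracted term is nonnegative and $(1-u)^2\le 1$. This gives $T \ge \epsilon^{-1}\int_{\tilde p_0}^{\eta} u^{-(2-q)}\,du$, which is $\frac{\tilde p_0^{-(1-q)}}{(1-q)\epsilon}(1+o(1))$ for $q<1$ and $\epsilon^{-1}\log(1/\tilde p_0)(1+o(1))$ at $q=1$.

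For the matching bound, split the integral at an intermediate point $\eta' = \tilde p_*(q)/2^{1/q}$ (any fixed fraction works). On $[\tilde p_0,\eta']$ we have $\epsilon u^{-q} \ge 2\epsilon\tilde p_*^{-q}\ge 2(1-\epsilon)(1-u)^{-q}$ up to a harmless $(1+o(1))$ factor from $(1-u)^{-q}\approx(1-\tilde p_*)^{-q}$. So the bracket is at least $\tfrac12\epsilon u^{-q}$, and $(1-u)^2\ge 1/4$. Hence $F(u)\ge \tfrac{\epsilon}{8}u^{2-q}$, which yields the same integral up to the constant $8$.

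The remaining piece, $\int_{\eta'}^{\eta}du/F(u)$, is independent of $\tilde p_0$ and so contributes $O(1)$. This piece is the main obstacle, because the bracket vanishes at $\tilde p_*$. If $\eta=\tilde p_*(q)$ exactly, the equilibrium is only reached asymptotically and the integral diverges logarithmically. I would therefore read the hypothesis as $\eta<\tilde p_*(q)$ strictly, or as $\eta$ a fixed fraction of it. On the compact interval $[\eta',\eta]$, $F$ is then continuous and bounded below by a positive constant independent of $\tilde p_0$, so this term is a constant that is $o$ of the leading term as $\tilde p_0\to0$. A subtler issue is that this constant depends on $\epsilon$ and $q$. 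I would state the $\Theta$ as asymptotic in $\tilde p_0$ for fixed $(\epsilon,q,\eta)$, with the explicit $1/\epsilon$ tracked only in the leading term.

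The corollaries follow directly. The speedup ratio is the quotient of the two $\Theta$ expressions: $\tilde p_0^{-(1-q)}/\tilde p_0^{-(1-q')} = \tilde p_0^{-(q'-q)}$ for $q'<1$. For $q'=1$ the ratio is $\tilde p_0^{-(1-q)}/\log(1/\tilde p_0)$, which also diverges; I would note that the stated $\Theta(\tilde p_0^{-(q'-q)})$ holds only up to this logarithm at $q'=1$. For $q=0$, substitute $\tilde p^{0}=(1-\tilde p)^0=1$ into \cref{eq:noise-fitting-dynamics}, giving $\dot{\tilde p}=-(1-2\epsilon)p^2\tilde p^2<0$. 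The trajectory is then strictly decreasing and never exceeds $\tilde p_0<\eta$, so $T_0^{\mathrm{noise}}=\infty$.
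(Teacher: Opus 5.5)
Your proposal is correct and follows the paper's route. You sandwich the drift between constant multiples of $\epsilon\tilde p^{2-q}$ on an interval where the noise term beats the clean term by a fixed factor; your $\eta'=\tilde p_*(q)/2^{1/q}$ plays the role of the paper's $\tilde p_K(q)$ with $K=2$. You then integrate the separable ODE and settle $q=0$ by the sign of the drift, as the paper does.

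Your additional points are valid refinements that the paper's proof passes over, since it simply assumes $\eta\le\tilde p_K(q)$. The $\tilde p_0$-independent contribution from $[\eta',\eta]$ is genuinely needed when $\eta$ is larger. The escape time is actually infinite when $\eta=\tilde p_*(q)$. At $q'=1$ the speedup ratio is $\Theta(\tilde p_0^{-(1-q)}/\log(1/\tilde p_0))$, not $\Theta(\tilde p_0^{-(1-q)})$.
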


\begin{proof}
By the noise-to-clean monotonicity established above, for any $K > 1$ there exists $\tilde p_K(q) = K^{-1/q}\,\tilde p_*(q)(1 + o(1))$ such that for $\tilde p \leq \tilde p_K$, the noise term in \cref{eq:noise-fitting-dynamics} exceeds $K$ times the clean term. Combined with $p = 1 - \tilde p \to 1$ as $\tilde p \to 0$ and $\|s\|^2 = \tilde p^2$:
\begin{align*}
\dot{\tilde p} \;\in\; \bigl[(1 - \tfrac{1}{K})\,\epsilon\,\tilde p^{2-q}\,(1 + o(1)),\; \epsilon\,\tilde p^{2-q}\bigr].
\end{align*}
Fix any $K > 1$ (e.g., $K = 2$). Separating variables, $\tilde p^{q-2}\,d\tilde p = \Theta(\epsilon)\,dt$. For $q \in (0, 1)$, integrating from $\tilde p_0$ to $\eta$ with $\tilde p_0 \ll \eta \leq \tilde p_K(q)$ gives
\begin{align*}
\frac{\tilde p_0^{-(1-q)} - \eta^{-(1-q)}}{1-q} = \Theta(\epsilon\,T),
\end{align*}
so $T_q^{\mathrm{noise}}(\tilde p_0) = \Theta(\tilde p_0^{-(1-q)}/((1-q)\epsilon))$ as $\tilde p_0 \to 0$. (The integral from exactly $\tilde p_0 = 0$ diverges for $q \leq 1$, so a positive starting contamination is required.) For $q = 1$, $\dot{\tilde p} = \Theta(\epsilon\,\tilde p)$ gives $\tilde p(t) = \tilde p_0\,\exp(\Theta(\epsilon\,t))$, so $T_1^{\mathrm{noise}}(\tilde p_0) = \Theta(\log(\eta/\tilde p_0)/\epsilon) = \Theta(\log(1/\tilde p_0)/\epsilon)$. The speedup ratio $T_q/T_{q'} = \Theta(\tilde p_0^{-(q'-q)})$ diverges for $q < q' \leq 1$ as $\tilde p_0 \to 0$.
\end{proof}

\paragraph{Structural parallel with cold-start escape.}
\cref{thm:cold-start-escape} gives $T_q^{\mathrm{escape}}(p_0) = \Theta(p_0^{-(1-q)}/(1-q))$ for $q < 1$ and $\Theta(\log(1/p_0))$ at $q{=}1$, with speedup ratio $\Theta(p_0^{-(q'-q)})$. \cref{thm:noise-fitting} gives $T_q^{\mathrm{noise}}(\tilde p_0) = \Theta(\tilde p_0^{-(1-q)}/((1-q)\epsilon))$ and $\Theta(\log(1/\tilde p_0)/\epsilon)$, with speedup ratio $\Theta(\tilde p_0^{-(q'-q)})$ \Dash the \emph{exact dual}: same exponent in the small starting probability ($p_0$ for cold-start escape from clean, $\tilde p_0$ for noise-fitting escape from corruption), with the noise rate $\epsilon$ as the only additional rate factor. The same $P_\vtheta^{-q}$ amplification accelerates commitment to clean and corrupted supervision by the same multiplicative factor. Static mode-seeking (\cref{thm:endpoint-behavior}) is recovered as the $t \to \infty$ limit of \cref{eq:noise-fitting-dynamics}: $\tilde p(t) \to \tilde p_*(q) \to 0$ as $q \to 0$.

\section{Proofs and Pseudocode for \texorpdfstring{\cref{sec:practical-mc-estimates}}{Monte Carlo Estimators}: Monte Carlo Estimators}
\label{sec:proofs-estimators}

\BiasExpansion*
\begin{proof}
We write
\begin{align*}
    \mu_w \triangleq \E{w_m} = P_{\vtheta},
    \qquad
    \mu_g \triangleq \E{g_m} = \nabla_{\vtheta}\ell_0(\vtheta;\vx^*,\vy^*).
\end{align*}
Define the smooth map
\begin{align*}
    f(a,b) \triangleq b\,a^{-q},
\end{align*}
for $a>0$. Then
\begin{align*}
    \hat{\nabla}_{\vtheta}\ell_q(q,\vtheta;\vx^*,\vy^*,M)
    =
    f(\bar w_M,\bar g_M),
\end{align*}
while the target gradient is
\begin{align*}
    \nabla_{\vtheta}\ell_q(\vtheta,q;\vx^*,\vy^*)
    =
    f(\mu_w,\mu_g)
    =
    \mu_g\,\mu_w^{-q}.
\end{align*}

Almost sure convergence follows from the Strong Law of Large Numbers, since $\bar w_M \to \mu_w$ and $\bar g_M \to \mu_g$ almost surely, and $f$ is continuous at $(\mu_w,\mu_g)$ because $\mu_w=P_{\vtheta}>0$.

For the bias expansion, we exploit the linearity of $f$ in its second argument: $f(a,b)=b\,a^{-q}$, so
\begin{align*}
    f(\bar w_M,\bar g_M)
    &= \bar g_M \cdot h(\bar w_M)
    \\
    &= \underbrace{\mu_g\,h(\bar w_M)}_{\text{first piece}} + \underbrace{(\bar g_M - \mu_g)\,h(\bar w_M)}_{\text{second piece}},
\end{align*}
where $h(a) \triangleq a^{-q}$ is a scalar function whose derivatives $h^{(k)}(a) = (-q)(-q\!-\!1)\cdots(-q\!-\!k\!+\!1)\,a^{-(q+k)}$ depend only on $a$.

\paragraph{First piece.} Expand $h(\bar w_M)$ to \emph{third} order around $\mu_w$, with $h'(a)=-qa^{-q-1}$, $h''(a)=q(q+1)a^{-q-2}$, $h'''(a)=-q(q+1)(q+2)a^{-q-3}$:
\begin{align*}
    h(\bar w_M)
    &= \underbrace{h(\mu_w)}_{\E{\cdot} = \mu_w^{-q}}
    + \underbrace{h'(\mu_w)(\bar w_M - \mu_w)}_{\E{\cdot} = 0}
    + \underbrace{\tfrac{1}{2}h''(\mu_w)(\bar w_M - \mu_w)^2}_{\E{\cdot} = \frac{q(q+1)}{2M}\mu_w^{-q-2}\mathbf{Var}(w_m)}
    \\
    &\quad+ \underbrace{\tfrac{1}{6}h'''(\mu_w)(\bar w_M - \mu_w)^3}_{\E{\cdot} = O(M^{-2})\text{ via }\kappa_3/M^2}
    + \underbrace{R_M^{(1)}}_{\text{4th-order}}.
\end{align*}
Therefore:
\begin{align*}
    \mu_g\,\E{h(\bar w_M)}
    &= \mu_g\,\mu_w^{-q}
    + \frac{q(q+1)}{2M}\,\mu_g\,\mu_w^{-q-2}\,\mathbf{Var}(w_m)
    \\
    &\quad+ O(M^{-2})
    + \mu_g\,\E{R_M^{(1)}}.
\end{align*}

\paragraph{Second piece.} The factor $(\bar g_M - \mu_g) = O_p(M^{-1/2})$, so a \emph{second}-order expansion of $h(\bar w_M)$ suffices. Multiplying $(\bar g_M - \mu_g)$ by each term of the expansion and taking expectations:
\begin{align*}
    &\E{(\bar g_M - \mu_g)\,h(\bar w_M)}
    \\
    &=
    \underbrace{h(\mu_w)\,\E{\bar g_M - \mu_g}}_{= 0}
    +
    \underbrace{h'(\mu_w)\,\E{(\bar g_M - \mu_g)(\bar w_M - \mu_w)}}_{= -\frac{q}{M}\mu_w^{-q-1}\mathbf{Cov}(g_m, w_m)}
    \\
    &\quad+
    \underbrace{\tfrac{1}{2}h''(\mu_w)\,\E{(\bar g_M - \mu_g)(\bar w_M - \mu_w)^2}}_{= O(M^{-2})\text{ via i.i.d.\ expansion}}
    +
    \underbrace{\E{R_M^{(2)}}}_{\text{3rd-order remainder}}.
\end{align*}
For the cross moment, expand $\E{(\bar g_M - \mu_g)(\bar w_M - \mu_w)^2} = M^{-3}\sum_{i,j,k}\E{(g_i - \mu_g)(w_j - \mu_w)(w_k - \mu_w)}$. By independence, the only nonzero index pattern is $i = j = k$ (all others vanish because $\E{g_i - \mu_g} = 0$ or $\E{w_j - \mu_w} = 0$). The $M$ surviving terms give $\E{(g_m - \mu_g)(w_m - \mu_w)^2}/M^2 = O(M^{-2})$, since $|(w_m - \mu_w)^2| \leq 1$ and $\E{\|g_m\|} < \infty$ (Assumption~2). The remainder has the form $R_M^{(2)} = (\bar g_M - \mu_g) \cdot O(|\bar w_M - \mu_w|^3)$.

\paragraph{Combining.} Adding the two pieces and substituting $\mu_w=P_{\vtheta}$, $\mu_g=\nabla_{\vtheta}\ell_0$, $\nabla_{\vtheta}\ell_1 = \nicefrac{\nabla_{\vtheta}\ell_0}{P_{\vtheta}}$:
\begin{align}
    \mathbb{E}\!\left[
        \hat{\nabla}_{\vtheta}\ell_q(q,\vtheta;\vx^*,\vy^*,M)
    \right]
    & =
    \nabla_{\vtheta}\ell_q(\vtheta,q;\vx^*,\vy^*)
    \nonumber \\
    &\quad+
    \frac{q}{M P_{\vtheta}^{q+1}} \cdot \left[
        \frac{q+1}{2}
        \nabla_{\vtheta}\ell_1(\vtheta;\vx^*,\vy^*)\,\mathbf{Var}(w_m)
        -
        \mathbf{Cov}(g_m,w_m)
    \right]     \label{eq:combined-bias} \\
    & \quad +
    \E{R_M}, \nonumber
\end{align}
where $R_M = \mu_g R_M^{(1)} + R_M^{(2)}$.

\paragraph{Remainder bound.} Write $\E{R_M} = \E{R_M \cdot \mathbf{1}_A} + \E{R_M \cdot \mathbf{1}_{A^c}}$ where $A = \{\bar w_M \geq P_\vtheta/2\}$.

\emph{On $A$.} The derivatives of $h$ are bounded on $\{a \geq P_\vtheta/2\}$: $|h^{(k)}(a)| \leq C_k$.

For $R_M^{(1)}$ (the \emph{fourth}-order scalar remainder), the integral form gives $|R_M^{(1)}| \leq C_4 |\bar w_M - \mu_w|^4$ on $A$. Since $w_m \in [0{,}1]$, $\E{|\bar w_M - \mu_w|^4} = O(M^{-2})$, so $\E{|R_M^{(1)}| \cdot \mathbf{1}_A} = O(M^{-2})$.

For $R_M^{(2)} = (\bar g_M - \mu_g) \cdot O(|\bar w_M - \mu_w|^3)$ on $A$ (the \emph{third}-order remainder from the second piece, a vector quantity), Cauchy--Schwarz gives $\E{\|R_M^{(2)}\| \cdot \mathbf{1}_A} \leq C_3\,\sqrt{\E{\|\bar g_M - \mu_g\|^2}}\,\sqrt{\E{(\bar w_M - \mu_w)^6}} = O(M^{-1/2})\,O(M^{-3/2}) = O(M^{-2})$, using Assumption~2 and the boundedness of $w_m$.

\emph{On $A^c$.} Assumption~3 gives $\bar w_M \geq \epsilon > 0$, so $|h(\bar w_M)| \leq \epsilon^{-q}$ everywhere and $\|f(\bar w_M, \bar g_M)\| \leq \epsilon^{-q}\,\|\bar g_M\|$. Therefore $\|R_M\| \leq \|f(\bar w_M, \bar g_M)\| + \|T_M\| \leq C\,\epsilon^{-q}\,(1 + \|\bar g_M\|)$, where $T_M$ collects the (bounded) Taylor terms. Again by Cauchy--Schwarz,
\begin{align*}
    \E{\|R_M\| \cdot \mathbf{1}_{A^c}}
    &\leq C\,\epsilon^{-q}\,\sqrt{\E{(1 + \|\bar g_M\|)^2}}\,\sqrt{P(A^c)}.
\end{align*}
The first factor is $O(1)$ by Assumption~2. For the second, since $w_m \in [0,1]$ are i.i.d.\ with mean $P_\vtheta$, Hoeffding's inequality with $t = P_\vtheta/2$ gives $P(A^c) = P(\bar w_M - P_\vtheta \leq -P_\vtheta/2) \leq \exp(-M P_\vtheta^2/2)$. Thus $\E{\|R_M\| \cdot \mathbf{1}_{A^c}}$ decays faster than any polynomial in $M$.

Combining: $\E{R_M} = O(M^{-2})$, so the leading-order bias is the explicit formula above.

\paragraph{Bound on the bracketed coefficient.} In \cref{eq:combined-bias}, the prefactor $\nicefrac{q}{(M P_\vtheta^{q+1})}$ has $P_\vtheta^{-(q+1)}$ scaling, but the bracket $[\nicefrac{(q+1)}{2}\,\nabla_\vtheta\ell_1\,\mathbf{Var}(w_m) - \mathbf{Cov}(g_m, w_m)]$ scales as $O(P_\vtheta)$, so one factor of $P_\vtheta$ cancels. Specifically:
\begin{itemize}[noitemsep]
    \item $\mathbf{Var}(w_m) \leq \mathbb{E}[w_m^2] \leq \mathbb{E}[w_m] = P_\vtheta$ since $w_m \in [0, 1]$.
    \item $\nabla_\vtheta\ell_1 = -\nabla_\vtheta \log P_\vtheta = -s$ is bounded under the bounded-score assumption used in \cref{thm:cold-start-lower}.
    \item Under bounded per-trajectory score $\|\nabla_\vtheta \log p_\vtheta(\vz, \vy^* \mid \vx^*)\| \leq C'$ (which follows from bounded weights and Lipschitz activations), $\|g_m\| \leq C' w_m$, and Cauchy--Schwarz gives $\|\mathbf{Cov}(g_m, w_m)\| \leq \sqrt{\mathbf{Var}(g_m)\,\mathbf{Var}(w_m)} \leq \sqrt{C'^2 P_\vtheta \cdot P_\vtheta} = O(P_\vtheta)$.
\end{itemize}
Hence the bracket is bounded by $\nicefrac{(q+1)}{2}\,O(P_\vtheta) + O(P_\vtheta) = O(P_\vtheta)$ (the $\nicefrac{(q+1)}{2}$ multiplier is bounded by $1$ for $q \in [0, 1]$ and absorbs into the constant), and the leading-order bias is $\nicefrac{q}{(M P_\vtheta^{q+1})} \cdot O(P_\vtheta) = O\!\left(\nicefrac{q}{(M P_\vtheta^q)}\right)$, yielding \cref{eq:bias-expansion}. The bias scales with the same $P_\vtheta^{-q}$ exponent as the cold-start amplification factor.
\end{proof}

\subsection{RLOO control variate derivation}
\label{sec:rloo-derivation}

We derive the RLOO estimator~\eqref{eq:rloo-estimator} from the plug-in estimator~\eqref{eq:mc-estimator}. Using the chain rule, $g_m$ from \eqref{eq:def-gm} decomposes into a score-function term and a pathwise term:
\begin{align}
    g_m
    &=
    -\,w_m\,\nabla_{\vtheta}\log p_{\vtheta}(\vz^{(m)} \mid \vx^*)
    -
    \nabla_{\vtheta} w_m.
    \label{eq:gm-decomp}
\end{align}
Substituting into the plug-in estimator isolates the score-function component:
\begin{align}
    & \hat{\nabla}^{\text{plug-in}}_{\vtheta}\ell_q
    =
    \frac{1}{M}\sum_{m=1}^M \left[
        \frac{-w_m}{(\bar w_M)^q} \nabla_{\vtheta}\log p_{\vtheta}(\vz^{(m)} \mid \vx^*)
        -
        \frac{\nabla_{\vtheta} w_m}{(\bar w_M)^q}
    \right].
    \label{eq:mc-estimator-decomp}
\end{align}
Since $\mathbb{E}[\nabla_{\vtheta}\log p_{\vtheta}(\vz^{(m)} \mid \vx^*)] = 0$, we can subtract any baseline from the score-function coefficient $-w_m/(\bar w_M)^q$ without changing the expected value, provided the baseline does not depend on $\vz^{(m)}$.

We use a leave-one-out approximation. Let $\bar w_{\neg m} = \frac{1}{M-1}\sum_{j \neq m} w_j$. Replacing $w_m$ with $\bar w_{\neg m}$ in the coefficient, the batch mean collapses to $\bar w_{\neg m}$, giving a surrogate coefficient of $-(\bar w_{\neg m})^{1-q}$. Subtracting this baseline yields the RLOO estimator
\begin{equation}
    \hat{\nabla}^{\mathrm{RLOO}}_{\vtheta}\ell_q
    =
    \frac{1}{M}\sum_{m=1}^M \Bigg[
        - \underbrace{\biggl( \frac{w_m}{(\bar w_M)^q} - (\bar w_{\neg m})^{1-q} \biggr)}_{\text{centered weight}}
         \cdot \nabla_{\vtheta}\log p_{\vtheta}(\vz^{(m)} \mid \vx^*)
        -
        \frac{\nabla_{\vtheta} w_m}{(\bar w_M)^q}
    \Bigg].
    \label{eq:rloo-estimator}
\end{equation}

\paragraph{Endpoint recovery.}
At $q = 0$, the centered weight evaluates to $w_m - \bar w_{\neg m}$, and the score-function term becomes $-(w_m - \bar w_{\neg m})\,\nabla_{\vtheta}\log p_{\vtheta}(\vz^{(m)} \mid \vx^*)$, exactly recovering the REINFORCE leave-one-out (RLOO) estimator standard in RLVR. At $q = 1$, the centered weight is $w_m/\bar w_M - 1$; since $\sum_{m=1}^M (w_m/\bar w_M - 1) = 0$, this acts as a self-normalizing baseline that strictly centers the importance weights across the batch.

\begin{proposition}[RLOO bias preservation]
\label{thm:rloo-bias}
Under the assumptions of \cref{thm:mc-bias-expansion}, the RLOO estimator~\eqref{eq:rloo-estimator} satisfies the same bias expansion as the plug-in estimator~\eqref{eq:mc-estimator}.
\end{proposition}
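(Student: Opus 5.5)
The plan is to write the RLOO estimator as the plug-in estimator plus a correction, and show the correction has mean $O(M^{-2})$, so it cannot touch the $1/M$ term. Subtracting \cref{eq:mc-estimator-decomp} from \cref{eq:rloo-estimator}, the pathwise terms cancel identically. What remains is
\[
\Delta_M \triangleq \hat{\nabla}^{\mathrm{RLOO}}_{\vtheta}\ell_q - \hat{\nabla}^{\text{plug-in}}_{\vtheta}\ell_q = \frac{1}{M}\sum_{m=1}^M (\bar w_{\neg m})^{1-q}\,\nabla_{\vtheta}\log p_{\vtheta}(\vz^{(m)} \mid \vx^*).
\]
The key structural fact is that $\bar w_{\neg m}$ is a function of $\{\vz^{(j)}\}_{j\neq m}$ only. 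Because the samples are i.i.d., it is independent of $\vz^{(m)}$.

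The first step is to take expectations term by term, conditioning on the other samples. Then
\[
\mathbb{E}\bigl[(\bar w_{\neg m})^{1-q}\,\nabla_{\vtheta}\log p_{\vtheta}(\vz^{(m)}\mid\vx^*)\bigr] = \mathbb{E}\bigl[(\bar w_{\neg m})^{1-q}\bigr]\cdot \mathbb{E}\bigl[\nabla_{\vtheta}\log p_{\vtheta}(\vz^{(m)}\mid\vx^*)\bigr] = 0,
\]
using the score identity already invoked in \cref{sec:rloo-derivation}. For this to be valid, the product must be integrable. Since $w_j\in[0,1]$, we have $(\bar w_{\neg m})^{1-q}\le 1$. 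Integrability of the prior score follows from Assumption~2 together with the decomposition \cref{eq:gm-decomp}; alternatively we can adopt the bounded per-trajectory score condition, as in the last part of \cref{thm:mc-bias-expansion}. This gives $\mathbb{E}[\Delta_M]=0$ \emph{exactly} for every $M\ge 2$.

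Therefore
\[
\mathbb{E}\bigl[\hat{\nabla}^{\mathrm{RLOO}}_{\vtheta}\ell_q\bigr] = \mathbb{E}\bigl[\hat{\nabla}^{\text{plug-in}}_{\vtheta}\ell_q\bigr],
\]
and the expansion \cref{eq:bias-expansion}, with its $O(M^{-2})$ remainder and the simplification to $O(q/(MP_\vtheta^q))$, transfers verbatim from \cref{thm:mc-bias-expansion}. Consistency also transfers. By the SLLN and continuity, $(\bar w_{\neg m})^{1-q}\to P_\vtheta^{1-q}$ uniformly in $m$ almost surely, because $\max_m|\bar w_{\neg m}-\bar w_M|\le 1/(M-1)$. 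Hence $\Delta_M$ behaves like $P_\vtheta^{1-q}$ times the sample mean of the scores, plus a vanishing term. That sample mean tends to $0$ almost surely.

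The main obstacle is the well-definedness of the baseline, not the algebra. The baseline must not depend on $\vz^{(m)}$; a baseline built from $\bar w_M$ instead of $\bar w_{\neg m}$ would break the factorization and introduce a new $O(1/M)$ covariance term. I would check two things carefully. First, that $\nabla_\vtheta$ is treated as acting only on the explicit log-probability, not on the baseline, in the same way as the stop-gradient convention of the plug-in estimator. Second, that the integrability condition for the score holds under the stated assumptions, adding the mild bounded-score condition if Assumption~2 alone does not give it.
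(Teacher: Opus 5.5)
Your proposal is correct and is the paper's proof: the pathwise terms cancel, the difference is $\frac{1}{M}\sum_m (\bar w_{\neg m})^{1-q}\,\nabla_{\vtheta}\log p_{\vtheta}(\vz^{(m)} \mid \vx^*)$, and because $\bar w_{\neg m}$ is independent of $\vz^{(m)}$ the score identity makes each term's mean exactly zero, so the two estimators have identical expectations for every $M$. Your integrability and consistency remarks go beyond what the paper checks, which simply invokes the score identity; note that Assumptions~2--3 alone do not separately control the prior score, since they bound only the joint score $\nabla_{\vtheta}\log p_{\vtheta}(\vz^{(m)},\vy^* \mid \vx^*)$, so your bounded per-trajectory-score fallback (or finiteness of the trajectory space) is the right justification.
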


\begin{proof}
The RLOO estimator~\eqref{eq:rloo-estimator} differs from the plug-in estimator~\eqref{eq:mc-estimator-decomp} by subtracting $(\bar w_{\neg m})^{1-q}$ from the score-function coefficient $w_m / (\bar w_M)^q$ for each sample $m$. Denoting $s_m = \nabla_{\vtheta}\log p_{\vtheta}(\vz^{(m)} \mid \vx^*)$, the difference in expectations is
\begin{align*}
    \Delta &= \frac{1}{M}\sum_{m=1}^M \E{(\bar w_{\neg m})^{1-q}\, s_m}.
\end{align*}
Since $\bar w_{\neg m} = \frac{1}{M-1}\sum_{j \neq m} w_j$ is a function of $\{\vz^{(j)}\}_{j \neq m}$ only, and $s_m = \nabla_{\vtheta}\log p_{\vtheta}(\vz^{(m)} \mid \vx^*)$ is a function of $\vz^{(m)}$ only, the independence of the i.i.d.\ samples gives
\begin{align*}
    \E{(\bar w_{\neg m})^{1-q}\, s_m}
    &= \E{(\bar w_{\neg m})^{1-q}} \cdot \underbrace{\E{s_m}}_{= \, 0} = 0,
\end{align*}
where $\E{s_m} = \E[\vz \sim p_{\vtheta}]{\nabla_{\vtheta}\log p_{\vtheta}(\vz \mid \vx^*)} = 0$ is the standard score-function identity. Therefore $\Delta = 0$ and the two estimators have identical expectations for every $M$.
\end{proof}

\subsection{Endpoint recovery}
\label{sec:endpoint-recovery}

\begin{proposition}[Endpoint recovery for GARL and PAFT]
\label{thm:endpoint-recovery}
Fix a supervised example $(\vx^*, \vy^*)$ with $P_{\vtheta} > 0$.
\begin{enumerate}
    \item \textbf{GARL at $q = 0$} recovers Rao--Blackwellized REINFORCE \citep{williams1992simple,zhou2026reinforcing}:
    \begin{align*}
        \hat{\nabla}_{\vtheta}\ell_q\big|_{q=0}
        = \bar g_M
        = \frac{1}{M}\sum_{m=1}^M \bigl(-w_m\,\nabla_{\vtheta}\log p_{\vtheta}(\vz^{(m)}, \vy^* \mid \vx^*)\bigr),
    \end{align*}
    which is unbiased for $\nabla_{\vtheta}\ell_0$ by \cref{eq:gm-unbiased}. Each $g_m$ marginalizes out the output $\vy$ given $\vz^{(m)}$ analytically via $w_m = p_{\vtheta}(\vy^* \mid \vx^*, \vz^{(m)})$, rather than relying on a sampled output and binary reward.

    \item \textbf{GARL at $q = 1$} recovers the IWAE gradient estimator \citep{burda2016importance}, a self-normalized importance sampling (SNIS) estimator for $\nabla_\vtheta \log P_\vtheta$:
    \begin{align*}
        \hat{\nabla}_{\vtheta}\ell_q\big|_{q=1}
        = \frac{\bar g_M}{\bar w_M}
        = \frac{\sum_m w_m\,(-\nabla_{\vtheta}\log p_{\vtheta}(\vz^{(m)}, \vy^* \mid \vx^*))}{\sum_m w_m}.
    \end{align*}

    \item \textbf{PAFT at $q = 0$} reduces to posterior-resampled SFT scaled by $P_{\vtheta}$:
    \begin{align*}
        \hat{\nabla}_{\mathrm{PAFT}}\big|_{q=0}
        = -\bar w_M \cdot \frac{1}{K}\sum_{k=1}^{K} \nabla_{\vtheta}\log p_{\vtheta}(\vz^{(r_k)}, \vy^* \mid \vx^*).
    \end{align*}
    The factor $\bar w_M \approx P_{\vtheta}$ downweights hard instances so aggressively that this endpoint is overly conservative in practice. Unlike the other three endpoints, it does not correspond to a standard method.

    \item \textbf{PAFT at $q = 1$} recovers the EM gradient update with E-step posterior samples \citep{dempster1977maximum} / TRICE \citep{phan2023training}:
    \begin{align*}
        \hat{\nabla}_{\mathrm{PAFT}}\big|_{q=1}
        = -\frac{1}{K}\sum_{k=1}^{K} \nabla_{\vtheta}\log p_{\vtheta}(\vz^{(r_k)}, \vy^* \mid \vx^*).
    \end{align*}
    The attenuation vanishes: $(\bar w_M)^{1-1} = 1$, so all instances contribute equally, and the gradient is uniform SFT on approximate posterior samples.
\end{enumerate}
\end{proposition}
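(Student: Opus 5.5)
The plan is to verify each of the four endpoint reductions by direct substitution of $q\in\{0,1\}$ into the two estimators already defined, \cref{eq:mc-estimator} for GARL and \cref{eq:paft-estimator} for PAFT. Each substituted expression is then matched against the named method it is claimed to recover. No asymptotics are needed. The only analytic inputs are the unbiasedness identities \cref{eq:gm-unbiased} and the definitions \cref{eq:def-gm}.

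For GARL at $q=0$, $(\bar w_M)^0=1$, so the estimator is exactly $\bar g_M$. Unbiasedness for $\nabla_\vtheta\ell_0$ is \cref{eq:gm-unbiased}. To identify it with Rao--Blackwellized REINFORCE, I will expand $g_m$ via \cref{eq:gm-decomp}. This gives $-w_m\nabla\log p_\vtheta(\vz^{(m)}\mid\vx^*)-\nabla w_m$, which is REINFORCE with the sampled binary reward $\mathbb{I}(\hat\vy=\vy^*)$ replaced by its conditional expectation $\mathbb{E}[\mathbb{I}(\hat\vy=\vy^*)\mid\vz^{(m)}]=w_m$, plus the pathwise term on the output head. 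Conditioning on $\vz^{(m)}$ is the Rao--Blackwell step, and this matches the form in \citet{zhou2026reinforcing}.

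For GARL at $q=1$, the estimator is $\bar g_M/\bar w_M$, and the $1/M$ factors cancel. Since $w_m=p_\vtheta(\vz^{(m)},\vy^*\mid\vx^*)/p_\vtheta(\vz^{(m)}\mid\vx^*)$, the $w_m$ are exactly the importance weights of the joint against the prior proposal. The ratio is therefore the self-normalized importance-sampling estimate of $-\mathbb{E}_{p_\vtheta(\vz\mid\vx^*,\vy^*)}[\nabla\log p_\vtheta(\vz,\vy^*\mid\vx^*)]=-\nabla\log P_\vtheta$. This coincides with the gradient of the IWAE bound $\log\frac1M\sum_m w_m$ when the proposal is the prior; the score terms of the sampling distribution vanish in expectation, but here the estimator is defined directly, so I simply match formulas.

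The PAFT endpoints follow the same way. At $q=0$, the factor $(\bar w_M)^{1}$ appears explicitly, and I note that $\mathbb{E}\bar w_M=P_\vtheta$. At $q=1$, the factor becomes $(\bar w_M)^0=1$, which leaves uniform teacher-forced log-likelihood gradients on importance-resampled rationales. This is the M-step gradient under a sampling-importance-resampling approximation of the E-step posterior, as in TRICE. The main obstacle is not algebraic. It is being precise about which sense of ``recovers'' is meant for the IWAE and EM identifications. I would state that these are equalities of estimator formulas, given the prior proposal and resampling scheme, rather than claims about identical optimization trajectories.
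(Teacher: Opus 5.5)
Your proposal is correct and follows the paper's proof, which is exactly direct substitution of $q\in\{0,1\}$ into \cref{eq:mc-estimator} and \cref{eq:paft-estimator} using $(\bar w_M)^0=1$; your Rao--Blackwell check via \cref{eq:gm-decomp} and your SNIS reading of the weights are correct elaborations that the paper leaves implicit. One small correction to your IWAE remark: the match is with the IWAE gradient for the model parameters when the prior proposal is held fixed, meaning no gradient flows through $p_\vtheta(\vz^{(m)}\mid\vx^*)$ either in the weight denominator or in the sampler. By contrast, differentiating $\log\frac1M\sum_m w_m$ with $w_m=p_\vtheta(\vy^*\mid\vx^*,\vz^{(m)})$ gives only the pathwise part, and the sampling-distribution score terms do \emph{not} vanish in expectation (already at $M=1$, $\mathbb{E}[\log w_1\,\nabla_\vtheta\log p_\vtheta(\vz^{(1)}\mid\vx^*)]\neq 0$ in general). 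So drop that parenthetical and rely on formula matching, as you intend.
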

\begin{proof}
Each case follows by substituting $q = 0$ or $q = 1$ into the GARL estimator~\eqref{eq:mc-estimator} or PAFT estimator~\eqref{eq:paft-estimator} and simplifying $(\bar w_M)^0 = 1$.
\end{proof}

\subsection{PAFT bias and variance}
\label{sec:paft-bias-proof}

\begin{proposition}[PAFT has the same bias as GARL]
\label{thm:paft-bias}
Under the assumptions of \cref{thm:mc-bias-expansion},
$\mathbb{E}[\hat\nabla_{\mathrm{PAFT}}] = \mathbb{E}[\hat\nabla_{\mathrm{GARL}}]$
for all $M$. In particular, the PAFT estimator inherits the same leading bias expansion as in \cref{eq:bias-expansion}, simplifying to $O(\nicefrac{q}{M P_{\vtheta}^q})$ under bounded marginal and per-trajectory scores.
\end{proposition}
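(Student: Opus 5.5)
I will condition on the prior pool $\mathcal{F}_M \triangleq \sigma(\vz^{(1)},\dots,\vz^{(M)})$ and apply the tower property. The plan is to show that, conditional on the pool, PAFT's expectation over the resampled indices equals the GARL plug-in estimator $\bar g_M/(\bar w_M)^q$ exactly. Taking the outer expectation then gives $\mathbb{E}[\hat\nabla_{\mathrm{PAFT}}] = \mathbb{E}[\hat\nabla_{\mathrm{GARL}}]$ for every $M$. The bias expansion of \cref{thm:mc-bias-expansion}, including its $O(\nicefrac{q}{M P_\vtheta^q})$ simplification under bounded scores, then transfers verbatim, since it depends only on the mean of the estimator.

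\textbf{Step 1.} Given $\mathcal{F}_M$, the indices $r_1,\dots,r_K$ are i.i.d.\ categorical with $\Pr(r_k = m \mid \mathcal{F}_M) = w_m / \sum_j w_j = w_m/(M\bar w_M)$. This requires $\bar w_M > 0$, which Assumption~3 guarantees almost surely. Hence, writing $s^{\mathrm{joint}}_m \triangleq \nabla_\vtheta \log p_\vtheta(\vz^{(m)},\vy^* \mid \vx^*)$,
\begin{align*}
\mathbb{E}\Bigl[\tfrac{1}{K}\textstyle\sum_k s^{\mathrm{joint}}_{r_k} \,\Big|\, \mathcal{F}_M\Bigr] = \textstyle\sum_m \frac{w_m}{M\bar w_M} s^{\mathrm{joint}}_m = -\frac{\bar g_M}{\bar w_M},
\end{align*}
using $g_m = -w_m s^{\mathrm{joint}}_m$ from \cref{eq:def-gm}.

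\textbf{Step 2.} The attenuation factor $(\bar w_M)^{1-q}$ is $\mathcal{F}_M$-measurable, so it pulls out of the conditional expectation. Combining with Step 1,
\begin{align*}
\mathbb{E}[\hat\nabla_{\mathrm{PAFT}} \mid \mathcal{F}_M] = -(\bar w_M)^{1-q}\cdot\Bigl(-\frac{\bar g_M}{\bar w_M}\Bigr) = \frac{\bar g_M}{(\bar w_M)^q} = \hat\nabla_{\mathrm{GARL}}.
\end{align*}
Note that this identity holds for any $K \geq 1$. Also, the GARL estimator meant here is the plug-in \cref{eq:mc-estimator}; by \cref{thm:rloo-bias}, the RLOO version has the same mean, so the conclusion covers it as well.

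\textbf{Step 3 (main obstacle: integrability).} The tower property needs $\mathbb{E}\|\hat\nabla_{\mathrm{PAFT}}\| < \infty$. I would bound $\|\hat\nabla_{\mathrm{PAFT}}\| \le \max_m \|s^{\mathrm{joint}}_m\|$, using $(\bar w_M)^{1-q} \le 1$. Then I would use $\|s^{\mathrm{joint}}_m\| = \|g_m\|/w_m \le \|g_m\|/\epsilon$ by Assumption~3, together with $\mathbb{E}\|g_m\|^2 < \infty$ (Assumption~2) and a union bound over the $M$ samples, which gives finiteness. This is the only point where the assumptions enter beyond what \cref{thm:mc-bias-expansion} already uses. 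With integrability in hand, the equality of means follows, and the leading bias and its $O(\nicefrac{q}{M P_\vtheta^q})$ form are inherited directly.
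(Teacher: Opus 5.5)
Your proof is correct and follows the paper's own argument: condition on the prior pool, identify the conditional mean of the resampled average as the SNIS estimate $\sum_m \frac{w_m}{M\bar w_M}\nabla_\vtheta \log p_\vtheta(\vz^{(m)},\vy^*\mid\vx^*)$, pull out the pool-measurable factor $(\bar w_M)^{1-q}$ to obtain $\bar g_M/(\bar w_M)^q$, and apply the tower property. Your Step~3 integrability check, which bounds $\|\hat\nabla_{\mathrm{PAFT}}\|$ by $\epsilon^{-1}\sum_m\|g_m\|$ using Assumptions~2--3, is sound and fills in a detail the paper leaves implicit.
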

\begin{proof}
Conditional on the prior samples $\mathrm{pool} = \{(\vz^{(m)}, w_m)\}_{m=1}^M$, the factor $(\bar w_M)^{1-q}$ is deterministic. The importance-resampled average satisfies
\begin{align*}
    \mathbb{E}\!\left[\frac{1}{K}\sum_{k=1}^K f(\vz^{(r_k)}) \;\middle|\; \mathrm{pool}\right]
    = \sum_{m=1}^M \frac{w_m}{\sum_j w_j}\,f(\vz^{(m)})
    = \hat\mu_{\mathrm{SNIS}},
\end{align*}
where $f(\vz) = \nabla_{\vtheta}\log p_{\vtheta}(\vz, \vy^* \mid \vx^*)$. Therefore
\begin{align*}
    \mathbb{E}[\hat\nabla_{\mathrm{PAFT}} \mid \mathrm{pool}]
    &= -(\bar w_M)^{1-q} \cdot \hat\mu_{\mathrm{SNIS}} \\
    &= -(\bar w_M)^{1-q} \cdot \frac{\sum_m w_m f_m}{M\bar w_M} \\
    &= \frac{1}{(\bar w_M)^q} \cdot \frac{1}{M}\sum_m (-w_m f_m) \\
    &= \frac{\bar g_M}{(\bar w_M)^q}
    = \hat\nabla_{\mathrm{GARL}}.
\end{align*}
Taking outer expectations by the tower property: $\mathbb{E}[\hat\nabla_{\mathrm{PAFT}}] = \mathbb{E}[\hat\nabla_{\mathrm{GARL}}]$.
\end{proof}

\begin{proposition}[GARL has strictly lower variance than PAFT]
\label{thm:paft-variance}
Under the same setup, $\mathbf{Var}(\hat\nabla_{\mathrm{PAFT}}) \geq \mathbf{Var}(\hat\nabla_{\mathrm{GARL}})$, with equality only when $\mathbf{Var}(\hat\nabla_{\mathrm{PAFT}} \mid \mathrm{pool}) = 0$ almost surely.
\end{proposition}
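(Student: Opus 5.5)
The plan is to apply the law of total variance, conditioning on the prior pool $\mathrm{pool} = \{(\vz^{(m)}, w_m)\}_{m=1}^M$, and reuse the conditional-mean identity from the proof of \cref{thm:paft-bias}. That proof shows $\mathbb{E}[\hat\nabla_{\mathrm{PAFT}} \mid \mathrm{pool}] = \hat\nabla_{\mathrm{GARL}}$, where $\hat\nabla_{\mathrm{GARL}} = \bar g_M / (\bar w_M)^q$ is the plug-in estimator \cref{eq:mc-estimator}. Because the gradient is vector-valued, I will read $\mathbf{Var}$ either as the covariance matrix in the Loewner order or as total variance, i.e.\ the trace / expected squared norm. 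The argument is identical in both readings, so I will note this once.

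\textbf{Step 1.} Write the law of total variance (covariance):
\[
\mathbf{Var}(\hat\nabla_{\mathrm{PAFT}}) = \mathbb{E}\bigl[\mathbf{Var}(\hat\nabla_{\mathrm{PAFT}} \mid \mathrm{pool})\bigr] + \mathbf{Var}\bigl(\mathbb{E}[\hat\nabla_{\mathrm{PAFT}} \mid \mathrm{pool}]\bigr).
\]
By \cref{thm:paft-bias}'s computation, the second term equals $\mathbf{Var}(\hat\nabla_{\mathrm{GARL}})$. This holds because, given the pool, $(\bar w_M)^{1-q}$ is fixed and the resampled average has conditional mean equal to the SNIS average.

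\textbf{Step 2.} The first term is the expectation of a positive semidefinite conditional covariance, so it is PSD, which gives the inequality. To make that term explicit, note that given the pool the $K$ indices are i.i.d.\ categorical with probabilities $w_m / \sum_j w_j$. The conditional covariance is therefore
\[
(\bar w_M)^{2(1-q)}\,\tfrac{1}{K}\,\mathrm{Cov}_{\mathrm{SNIS}}(f),
\]
with $f_m = \nabla_\vtheta \log p_\vtheta(\vz^{(m)}, \vy^* \mid \vx^*)$.

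\textbf{Step 3 (equality case).} Equality in trace form means $\mathbb{E}[\operatorname{tr}\mathbf{Var}(\cdot \mid \mathrm{pool})] = 0$. A nonnegative random variable with zero mean vanishes almost surely, so equality holds iff $\mathbf{Var}(\hat\nabla_{\mathrm{PAFT}} \mid \mathrm{pool}) = 0$ a.s. In the matrix reading, equality of the covariance matrices gives the same conclusion, since the difference is the expectation of a PSD matrix.

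The only mild obstacle is integrability: all these variances must be finite for the decomposition to be valid. I would settle this with Assumptions 2--3 of \cref{thm:mc-bias-expansion}. Assumption 3 gives $\bar w_M \geq \epsilon$, so $(\bar w_M)^{1-q} \le 1$ and $\bar w_M^{-q} \le \epsilon^{-q}$. Combined with the bound on $\mathbb{E}\|f_m\|^2$ implied by Assumption 2 and $w_m \ge \epsilon$, both second moments are finite. Everything else is bookkeeping.
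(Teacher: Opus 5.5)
Your proposal is correct and follows the paper's proof: the law of total variance conditioned on the pool, with the conditional mean identified as $\hat\nabla_{\mathrm{GARL}}$ via \cref{thm:paft-bias}, and the nonnegative expected conditional variance giving both the inequality and the equality case. Your explicit conditional covariance, vector-valued reading, and integrability check are extra rigor the paper omits; one small correction there is that $(\bar w_M)^{1-q}\le 1$ comes from $w_m \in [0,1]$, not from Assumption 3.
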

\begin{proof}
By \cref{thm:paft-bias}, $\mathbb{E}[\hat\nabla_{\mathrm{PAFT}} \mid \mathrm{pool}] = \hat\nabla_{\mathrm{GARL}}$. The law of total variance gives
\begin{align*}
    \mathbf{Var}(\hat\nabla_{\mathrm{PAFT}})
    &= \mathbf{Var}\bigl(\mathbb{E}[\hat\nabla_{\mathrm{PAFT}} \mid \mathrm{pool}]\bigr)
    + \mathbb{E}\bigl[\mathbf{Var}(\hat\nabla_{\mathrm{PAFT}} \mid \mathrm{pool})\bigr] \\
    &= \mathbf{Var}(\hat\nabla_{\mathrm{GARL}})
    + \underbrace{\mathbb{E}\bigl[\mathbf{Var}(\hat\nabla_{\mathrm{PAFT}} \mid \mathrm{pool})\bigr]}_{\geq\, 0},
\end{align*}
with equality iff $\mathbf{Var}(\hat\nabla_{\mathrm{PAFT}} \mid \mathrm{pool}) = 0$ a.s. This holds when, for each pool realization, all resampled trajectories produce the same gradient \Dash e.g., when a single trajectory dominates the importance weights. In the non-degenerate case, the inequality is strict.
\end{proof}

\subsection{Pseudocode for GARL and PAFT}
\label{sec:pseudocode}

\begin{algorithm}[h]
\caption{GARL: per-example $J_Q$ gradient with RLOO control variate. \textit{Numerical stability}: $w_m = \prod_t p_\vtheta(y^*_t \mid \cdot)$ underflows for long $\vy^*$ in linear-space arithmetic, so $w_m$, $\bar w_M$, $\bar w_{\neg m}$, and $c_m$ should be computed in log-space (e.g., LogSumExp); the pathwise term $\nicefrac{\nabla_\vtheta w_m}{(\bar w_M)^q}$ should be implemented as $\frac{w_m}{(\bar w_M)^q}\,\nabla_\vtheta \log p_\vtheta(\vy^* \mid \vx^*, \vz^{(m)})$ (log-derivative trick), with the coefficient computed in log-space before being applied to the log-probability gradient.}
\label{alg:jq-gradient}
\begin{algorithmic}[1]
\REQUIRE Example $(\vx^*, \vy^*)$, interpolation parameter $q \in [0,1]$, number of latent samples $M \geq 2$ (for the leave-one-out baseline)
\STATE Sample latent trajectories $\vz^{(1)}, \dots, \vz^{(M)} \sim p_{\vtheta}(\cdot \mid \vx^*)$
\FOR{$m = 1, \dots, M$}
    \STATE $w_m \leftarrow p_{\vtheta}(\vy^* \mid \vx^*, \vz^{(m)})$ \hfill $\triangleright$ likelihood weight
    \STATE $\nabla_{\vtheta} w_m \leftarrow \nabla_{\vtheta}\, p_{\vtheta}(\vy^* \mid \vx^*, \vz^{(m)})$ \hfill $\triangleright$ pathwise gradient of output likelihood
\ENDFOR
\STATE $\bar w_M \leftarrow \frac{1}{M}\sum_{m=1}^M w_m$ \hfill $\triangleright$ batch mean (estimates $P_\vtheta$)
\FOR{$m = 1, \dots, M$}
    \STATE $\bar w_{\neg m} \leftarrow \frac{1}{M-1}\sum_{j \neq m} w_j$ \hfill $\triangleright$ leave-one-out mean
    \STATE $c_m \leftarrow \frac{w_m}{(\bar w_M)^q} - (\bar w_{\neg m})^{1-q}$ \hfill $\triangleright$ centered weight (RLOO baseline)
    \STATE $\hat{g}_m \leftarrow -c_m \, \nabla_{\vtheta}\log p_{\vtheta}(\vz^{(m)} \mid \vx^*) - \frac{\nabla_{\vtheta} w_m}{(\bar w_M)^q}$ \hfill $\triangleright$ score-function + pathwise terms
\ENDFOR
\STATE \textbf{return} $\hat{g} \leftarrow \frac{1}{M^q} \cdot \frac{1}{M}\sum_{m=1}^M \hat{g}_m$ \hfill $\triangleright$ per-example gradient estimate, rescaled by $1/M^q$ to bound per-sample advantage uniformly in $q$
\end{algorithmic}
\end{algorithm}

\begin{algorithm}[h]
\caption{PAFT: per-example $J_Q$ gradient via importance resampling. \textit{Numerical stability}: the resampling step should be implemented with a categorical distribution parameterized by log-weights, e.g., $\mathrm{Categorical}(\mathrm{logits} = [\log w_1, \dots, \log w_M])$, to avoid division-by-zero when all $w_m$ underflow.}
\label{alg:paft-gradient}
\begin{algorithmic}[1]
\REQUIRE Example $(\vx^*, \vy^*)$, interpolation parameter $q \in [0,1]$, prior samples $M$, resampled trajectories $K$
\STATE Sample latent trajectories $\vz^{(1)}, \dots, \vz^{(M)} \sim p_{\vtheta}(\cdot \mid \vx^*)$
\FOR{$m = 1, \dots, M$}
    \STATE $w_m \leftarrow p_{\vtheta}(\vy^* \mid \vx^*, \vz^{(m)})$ \hfill $\triangleright$ likelihood weight (same as GARL)
\ENDFOR
\STATE $\bar w_M \leftarrow \frac{1}{M}\sum_{m=1}^M w_m$ \hfill $\triangleright$ batch mean (estimates $P_\vtheta$)
\STATE Resample indices $r_1, \dots, r_K \sim \mathrm{Categorical}(w_1/\textstyle\sum_j w_j, \dots, w_M/\textstyle\sum_j w_j)$
\STATE $\hat{g} \leftarrow -\frac{(\bar w_M)^{1-q}}{M^q \, K}\sum_{k=1}^{K} \nabla_{\vtheta}\log p_{\vtheta}(\vz^{(r_k)}, \vy^* \mid \vx^*)$ \hfill $\triangleright$ attenuated SFT on coherent rationales, rescaled by $1/M^q$ for advantage bounding
\STATE \textbf{return} $\hat{g}$ \hfill $\triangleright$ rescaled per-example gradient estimate (matching Alg.\ 1's $1/M^q$ convention)
\end{algorithmic}
\end{algorithm}

\section{Additional Experimental Details}

\label{sec:experimental-details}

\paragraph{Subset construction.} 
We sample subsets from Huggingface datasets: FinQA from \texttt{dreamerdeo/finqa}, HotPotQA from \texttt{hotpotqa/hotpot\_qa}, and MuSiQue from \texttt{bdsaglam/musique}.
We construct training, validation, and test subsets by retaining instances whose pre-tokenization input length (in characters) falls below predefined caps. The caps are $8000$, $4000$, and $10000$ characters for FinQA, HotPotQA, and MuSiQue respectively. The resulting train/val/test subset sizes are 6145/872/1132, 9067/342/343, and 9985/579/445 for the 3 datasets respectively.

\paragraph{Training setup.} We do not apply KL regularization to a reference policy, following the VeriFree setup~\citep{zhou2026reinforcing}; \citet{liu2025understanding} found KL does not improve performance in this regime. Per-rationale token budgets force the thinking-end token (\texttt{</think>} for Qwen) once the budget is exhausted \citep{muennighoff2025s1simpletesttimescaling}; see \emph{Generation lengths} below. We use the AdamW optimizer \citep{loshchilov2018decoupled} for all experiments. Training batch size is $64$, and learning rate is set to $5 \times 10^{-7}$ for Qwen 3 0.6B (higher learning rate was unstable in preliminary experiments), and $1 \times 10^{-6}$ for Qwen 3 8B experiments respectively. We train for $2$ epochs for all datasets, with a constant learning rate (no warmup or decay). Rollouts during training use temperature $1.0$ (with top-$k$/top-$p$ sampling disabled).

\paragraph{Model selection.} We evaluate on the validation sets every $50$ steps, and also at the end of training. We select the checkpoint that performs best on the \texttt{m@16} metric. 

\paragraph{Generation lengths.} We cap the maximum generation lengths to be $4096$ for FinQA, $3072$ for HotPotQA, and $2048$ for MuSiQue. In addition, we allocate $128$ tokens at the end of generation for the answer.

\paragraph{Compute.} We conduct experiments on an $8$-GPU (NVIDIA A100 80Gb) machine. A single training step takes approximately $3$ minutes.

\clearpage

\section{Additional empirical figures}
\label{sec:additional-figures}

\begin{figure}[h]
\centering
\begin{subfigure}{0.48\linewidth}
\centering
\includegraphics[width=\linewidth]{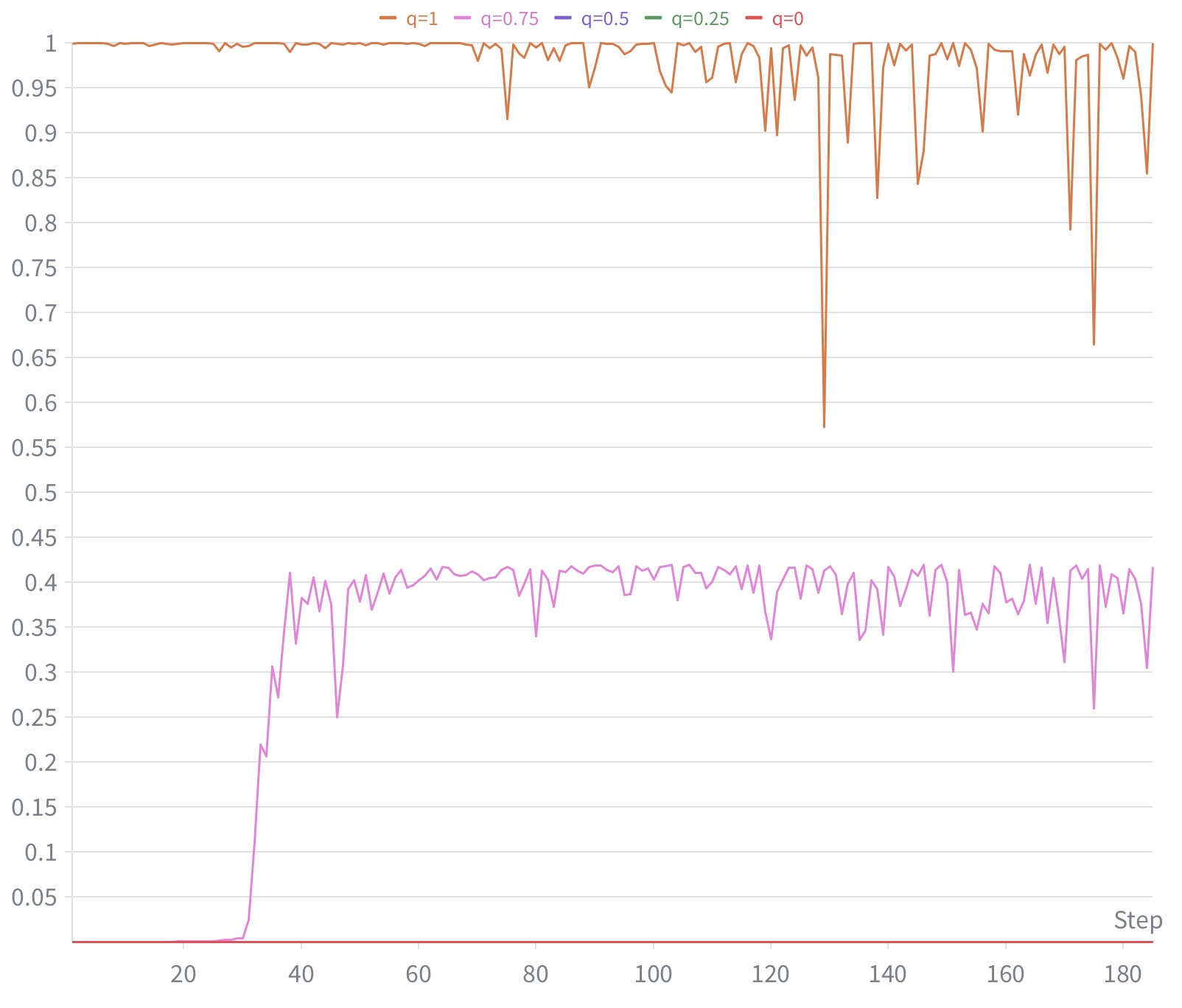}
\caption{Cold-start FinQA: maximum amplified advantage $c_m / M^q$ vs.\ step, where $c_m = w_m/(\bar w_M)^q - (\bar w_{\neg m})^{1-q}$ is the centered weight from \cref{eq:rloo-estimator} (bounded in $[-1, 1]$ after dividing by $M^q$). $q{=}1$ escapes immediately ($\Theta(\log(1/p_0))$); $q{=}0.75$ escapes sharply around step 35; $q{\leq}0.5$ remain flat \Dash qualitatively consistent with \cref{thm:cold-start-lower}.}
\label{fig:cold-start-dynamics}
\end{subfigure}\hfill
\begin{subfigure}{0.48\linewidth}
\centering
\includegraphics[width=\linewidth]{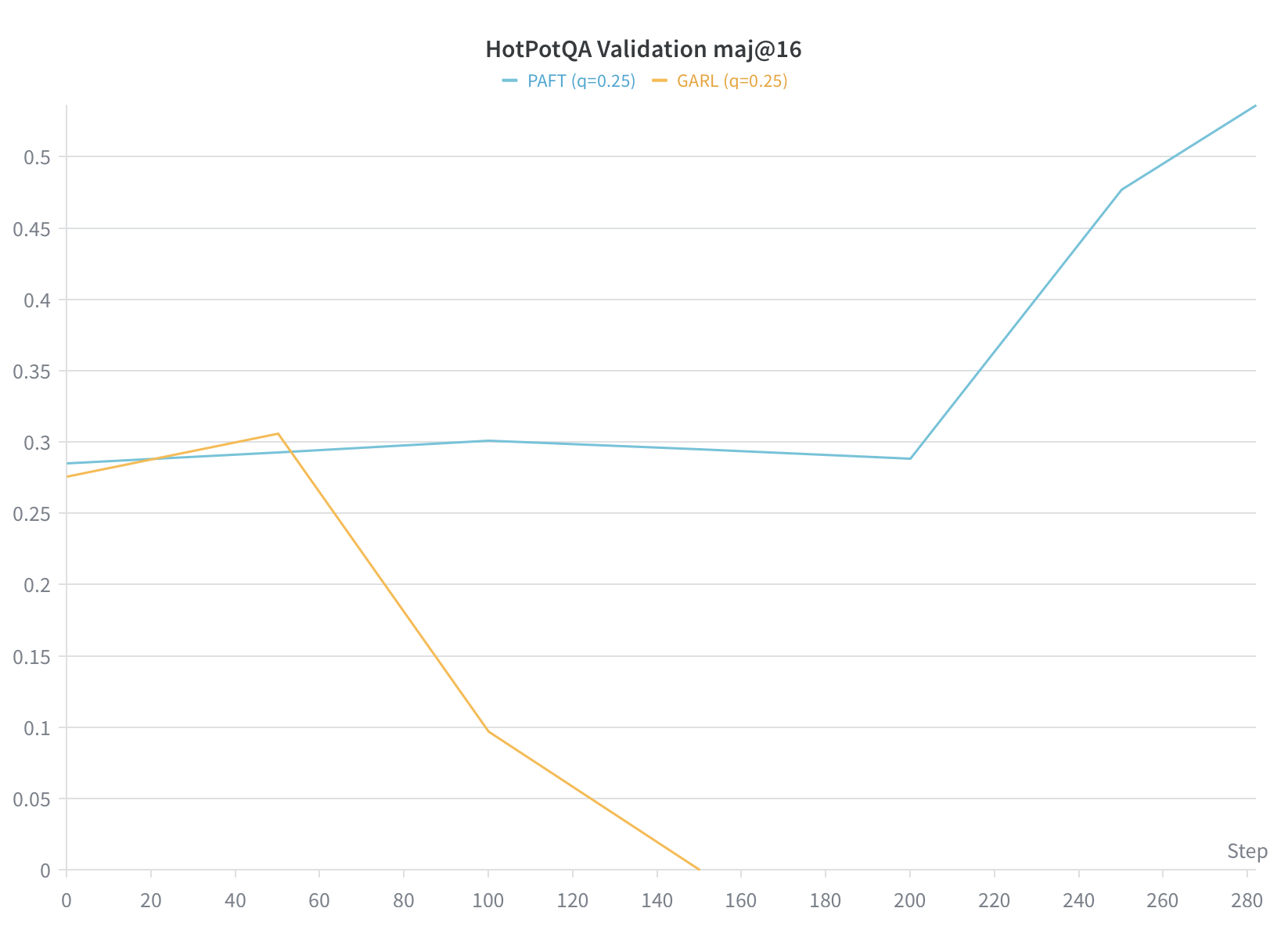}
\caption{Warm-start HotPotQA validation \texttt{m@16} at $q{=}0.25$: GARL peaks at step 50 (30.6) and collapses to zero by step 100; PAFT remains stable, peaking at 53.6 (cf.\ test \texttt{m@16} of $47.0$ in \cref{tab:warm-start-maj16}).}
\label{fig:hotpotqa-val-q25}
\end{subfigure}
\caption{GARL behavior across regimes. (a) Cold-start dynamics on FinQA: high $q$ enables escape; despite faster escape, $q{=}1$ has lower test accuracy than $q{=}0.75$ (\cref{tab:finqa-cold-start}), consistent with the $O(\nicefrac{q}{M P_\vtheta^q})$ ratio-estimator bias of \cref{thm:mc-bias-expansion} degrading gradient quality. (b) Warm-start validation curves at fixed $q{=}0.25$ isolate the estimator (prior-sampled, all-$M$ vs.\ posterior-resampled).}
\label{fig:empirical}
\end{figure}

\clearpage

\section{Future directions}
\label{sec:future-directions}

\paragraph{Multi-example dynamics.}
Our convergence analysis considers a single example. Across examples, the dynamics on each $p_i$ involve the kernel $K_{ij} = \nabla_\vtheta P_i \cdot \nabla_\vtheta P_j$. Its interplay with the $q$-dependent weighting $P_j^{-q}$ (potentially via NTK theory) could characterize how dataset-level coverage emerges from gradient-level amplification.

\paragraph{Annealing and richer posterior sampling.}
Principled schedule design adaptive to the current $P_\vtheta$, and automatic switching between GARL and PAFT, remain open. PAFT's importance resampling from the prior pool fails at cold start (vanishing attenuation and particle degeneracy); learned proposals, MCMC, or infilling models conditioned on both $\vx^*$ and $\vy^*$ could extend PAFT to lower-$P_\vtheta$ regimes.

\section*{Broader Impacts}

This work is methodological: we propose a loss family and corresponding gradient estimators for training reasoning language models, using publicly available checkpoints (Qwen 3) and benchmarks (FinQA, HotPotQA, MuSiQue); no new pre-trained models or datasets are released. The $J_Q$ continuum and its estimators (GARL, PAFT) enable post-training without annotated rationales, lowering the data bar for adapting reasoning models to specialized domains, low-resource languages, or settings where rationale annotations are expensive or unavailable. As with any post-training improvement, our methods could in principle be applied to fine-tune models for harmful applications; the same dual-use considerations apply to any RL-based post-training method (\emph{e.g.}, GRPO, RLHF), and our contributions at the level of the training objective remain compatible with existing safety-relevant training procedures.

\clearpage

\end{document}